\long\def\@makecaption#1#2{%
\ifx\@captype\@IEEEtablestring%
\normalsize\bgroup\par\centering\@IEEEtabletopskipstrut{\normalfont\sffamily\footnotesize #1}\\{\normalfont\sffamily\footnotesize #2}\par\addvspace{0.5\baselineskip}\egroup%
\@IEEEtablecaptionsepspace
\else
\@IEEEfigurecaptionsepspace
\setbox\@tempboxa\hbox{\normalfont\sffamily\footnotesize {#1.}\nobreakspace #2}%
\ifdim \wd\@tempboxa >\hsize%
\setbox\@tempboxa\hbox{\normalfont\sffamily\footnotesize {#1.}\nobreakspace}%
\parbox[t]{\hsize}{\normalfont\sffamily\footnotesize \noindent\unhbox\@tempboxa#2}%
\else%
\hbox to\hsize{\normalfont\sffamily\footnotesize\box\@tempboxa\hfil}%
\fi\fi}
\newtheorem{theorem}{Theorem}
\newtheorem{definition}{Definition}
\definecolor{LightGray}{gray}{0.9}
\begin{document}
%
\title{Socially Constrained Structural Learning for Groups Detection in Crowd}
%
%
%

\author{Francesco Solera,
        Simone Calderara,~\IEEEmembership{Member,~IEEE,}
        and~Rita Cucchiara,~\IEEEmembership{Fellow,~IEEE}
\thanks{Authors are with the Department
of Engineering Enzo Ferrari, University of Modena and Reggio Emilia,
Italy e-mail: name.surname@unimore.it}
}

%
%

\markboth{IEEE TRANSACTIONS ON PATTERN ANALYSIS AND MACHINE INTELLIGENCE}%
{Solera \MakeLowercase{\textit{et al.}}: Socially Constrained Structural Learning for Groups Detection in Crowd}
%



\IEEEcompsoctitleabstractindextext{
\begin{abstract}
Modern crowd theories agree that collective behavior is the result of the underlying interactions among small groups of individuals.
In this work, we propose a novel algorithm for detecting social groups in crowds by means of a Correlation Clustering procedure on people trajectories. The affinity between crowd members is learned through an online formulation of the Structural SVM framework and a set of specifically designed features characterizing both their physical and social identity, inspired by Proxemic theory, Granger causality, DTW and Heat-maps. To adhere to sociological observations, we introduce a loss function ($G$-MITRE) able to deal with the complexity of evaluating group detection performances. We show our algorithm achieves state-of-the-art results when relying on both ground truth trajectories and tracklets previously extracted by available detector/tracker systems.
\end{abstract}

\begin{IEEEkeywords}
Crowd analysis, group detection, Structural SVM, Correlation Clustering, Proxemic theory, Granger causality.
\end{IEEEkeywords}}

\maketitle

%
\IEEEpeerreviewmaketitle

%
%
%
%

\section{Introduction}
\label{sec:intro}
\IEEEPARstart{C}{rowd} phenomena are complex and their logic still escapes formal rules and precise social explanations.
Eventually, the ambition of crowd analysis is to characterize people behaviors, predict and prevent potentially dangerous situations and improve the well-being of communities.
This has been traditionally provided by simulation models~\cite{vizzari12} or automatic video analysis~\cite{ge_vision-based_2012}.
%
Recently, \emph{groups} have been recognized as the 
basic elements which compose the crowd~\cite{moussaid_walking_2010},
leading to an intermediate level of abstraction that is placed between two outfacing views: the crowd as a flow of indistinguishable people~\cite{Moore:2011:VCS:2043174.2043192} and its interpretation as a collection of individuals~\cite{PhysRevE.51.4282}.
Identifying groups is consequently a mandatory step to grasp the complex social dynamics ruling collective behaviors in crowds.
This poses new challenges for computer vision, since groups are definitely more difficult to characterize than pedestrians acting alone or as a whole.
%
%

In this work, we propose a learning based solution for visually detecting groups in low/medium density crowds (Fig.~\ref{fig:crowd}) under the hypothesis that the \emph{concept of group} can be visually discerned and people trajectories can be extracted up to some extent. The strong novelty of our approach is the joint adoption of sociologically grounded features and a learning framework able to specialize the concept of group accounting for different scenarios, motion constraints and crowd densities.
To this end, we adhere to a classical sociological interpretation of groups~\cite{turner81}, which can be formalized as follows.
\begin{definition}
\label{def:group}
A group is defined as two or more people interacting to reach a common goal and perceiving a shared membership, based on both physical and social identity.
\end{definition}
Accordingly, we propose a new formulation of the problem of detecting groups in crowds as a supervised \emph{Correlation Clustering} (CC)~\cite{bansal_correlation_2002}. We solve it through a \emph{Structural Support Vector Machines} (Structural SVM)~\cite{joachims_cutting-plane_2009} framework that learns a context dependent distance measure, based on a set of features inspired by Def.~\ref{def:group}  effective on both ground truth trajectories and automatically obtained tracklets. The design of socially grounded features is one of the main contributions of the work.

\begin{figure}[t!]
\centering
	\subfloat[\texttt{student003}]{
		\includegraphics[width=0.31\columnwidth]{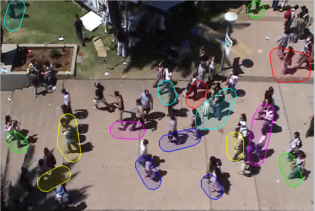}
	}
	\subfloat[\texttt{1shatian3}]{
		\includegraphics[width=0.31\columnwidth]{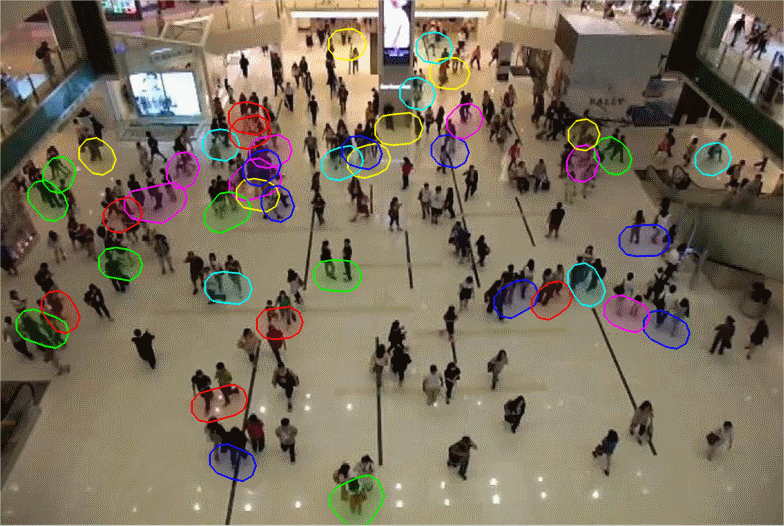}
	}
	\subfloat[\texttt{1dawei1}]{
		\includegraphics[width=0.31\columnwidth]{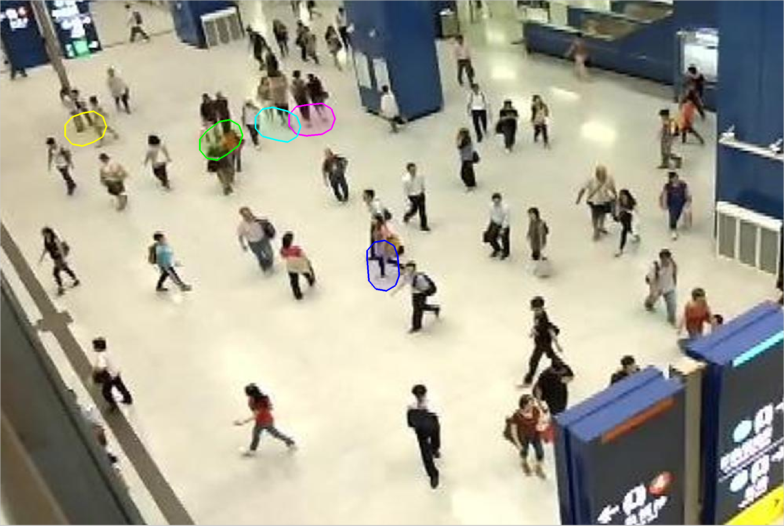}
	}
	\caption{Examples of social groups detected in crowds.} 
	\label{fig:crowd}
\end{figure}  

Moreover, a new socially based \emph{loss function} ($G$-MITRE) is defined for the Structural SVM.
Differently from previous solutions~\cite{pellegrini_improving_2010} and~\cite{ge_vision-based_2012}, our approach doesn't rely on scene-dependent parameters that would limit the applicability of the method in real world contexts. Finally, we also propose an online learning procedure that handles smooth variation in crowd composition and density, useful in online surveillance.

We annotated and made publicly available two new datasets: \emph{MPT-${\bf 20}$x${\bf 100}$} and \emph{GVEII} (see Sec.~\ref{sec:exp}). Results on standard benchmarks, as well as on the proposed datasets, outperform current methods.
We strongly believe that an automatic system for group detection will influence future public area visual surveillance and will bring benefits to modeling and simulation application for architectural planning by providing real and precise data observation of crowds phenomena.

\section{Related Work}
The modeling of pedestrian dynamics in crowds represents a relatively recent research field. Most of the works are based on sociological paradigms
and computer vision based approaches have also evolved under the influence of these theories.

\subsubsection*{Modeling and Observing the Crowd}
Most of the research work has tried to tackle the crowd as an exclusively collective phenomenon, where individuality does not exist. This recalls the primitive \emph{Popular Mind Theory}~\cite{bon2003crowd} by Gustave Le Bon, where the crowd was defined as a ``pathological monster with no individual consciousness''. Accordingly, crowds have been analyzed by means of physical models (\emph{e.g.} hydrodynamics~\cite{Moore:2011:VCS:2043174.2043192}), neglecting the existence of single individual purposes and goals. However, these models are effective mainly in extremely dense crowds.
Conversely, many other approaches have been inspired  by the 70s \emph{Social Loafing Theory}~\cite{Ingham1974371}, which stated that individuality was a strong requirement for the pursuit of personal goals. Helbings \emph{Social Force Model}~\cite{PhysRevE.51.4282}, which asserts that anyone movements towards her goals are influenced by the surrounding pedestrians, has been the main building block for many crowd modeling and analysis works, ranging from abnormal behavior detection~\cite{5206641} to tracking~\cite{5509779}.
Recently, studies on people attending events have underlined that most of the people tend to move in groups and social relations influence the way people behave in crowds~\cite{moussaid_walking_2010,bandini_crowd_2012}.
These empirical observations are supported by Reicher in the recent \emph{Social Identity Model of Deindividuation Effects}~\cite{Reicher95}, which assumes that crowd behavior is regulated by the social rules and behaviors groups choose to adopt. This is the main social paradigm underpinning our research too.

\subsubsection*{Visual Detection of Groups in Crowds}
It was only recently that group detection showed promising results. The process is in fact built upon several open challenges in computer vision, starting from people detection and tracking in crowds~\cite{Rodriguez11} to analyzing and grouping extracted trajectories~\cite{solera_structured_2013}.

Some works employ the concept of \emph{F-formations} by Kendon~\cite{kendon1990conducting} to discern group formation process. Broadly speaking, F-formations can be seen as specific positional and orientational patterns that people must sustain in order to be considered engaged in a social relationship. Despite robust results~\cite{cristani2011social}, this theory is suited to stationary groups only and is not defined for moving groups, a case which cannot be ignored in crowd analysis.

Thus, complementary approaches analyze pedestrians motion paths; according to the type of available tracklets, they can be partitioned in group-based, individual-group joint and individual-based.
In \textit{group-based} approaches, groups are considered as atomic entities in the scene since no higher level information can be extracted neatly, typically due to high noise or high complexity of crowded scenes \cite{Feldmann11, shao14}. Since these models are often too simplistic to further infer on groups behavior, \textit{individual-group joint} approaches try to overcome the lack of finer information by hypothesizing trajectories while tracking groups at a coarser level \cite{Pang08, Bazzani12}.
Finally, \textit{individual-based} tracking algorithms build up on single pedestrians trajectories.
This kind of approach has been gaining momentum only recently since tracking even in high density crowds is becoming everyday a more feasible task~\cite{Rodriguez11}.
Pellegrini~\emph{et al.}~\cite{pellegrini_improving_2010} employ a Conditional Random Field to jointly predict trajectories and estimate group memberships, modeled as latent variables, over a short time window.
Yamaguchi~\emph{et al.}~\cite{yamaguchi_who_2011} predict whether two pedestrians are in the same group through a linear SVM on trivial distance, speed difference and time overlap information.
Recently, Chang~\emph{et al.}~\cite{Chang11} proposed a soft segmentation process to partition the crowd by constructing a weighted graph, where the edges represent the probability of individuals to belong to the same group. 
An interesting unsupervised approach is Zanotto~\emph{et. al} ~\cite{zanotto12}, where a potentially infinite mixture model is fitted on pedestrians, regarded as sampled observations from the mixture. Previous frames data and predictions are used as prior information for the models (one for each group), but pairwise relations between individuals are neglected as groups are modeled only through the mean position and velocity of their members.
%
Above all, we mention Ge~\emph{et al.}~\cite{ge_vision-based_2012} that suggests the use of an agglomerative approach to cluster trajectories, as we do.
They hierarchically merge clusters by evaluating a well-founded sociological inter-group closeness measure defined on a combination of proximity and velocity features, stopping when a given condition is met.

Conversely, our method does not rely neither on relative position or velocity fixed thresholds~\cite{ge_vision-based_2012,zanotto12} nor on sequence-dependent parameters~\cite{pellegrini_improving_2010}; it is flexible and general as the features are not scene-specific~\cite{Chang11} and their contribution is learned from examples. Thanks to the use of a clustering inference rule, solutions proposed by our method are partitions and not coverings of the members of the crowd~\cite{yamaguchi_who_2011}, meaning that pairwise relations are consistent with the overall group structure found. Moreover, the use of a time window to predict groups let the method recognize that non-trivial behaviors (\emph{e.g.}~neglecting strict proximity) may occur, whereas frame-by-frame methods are limited to short term reasoning~\cite{zanotto12}. Yet, the discriminative nature of the employed framework makes learning compelling in terms of both required data and computational cost, as opposed to graphical models optimizing over a multiple hypothesis space~\cite{pellegrini_improving_2010}.\\

\noindent This work extends our preliminary attempt in~\cite{solera_structured_2013}. Here we prove our proposal complies with social theories of group formation, we devise and investigate new features to better adhere to the sociological theory underpinning our method and, eventually, extend the tests to new remarkably complex datasets and compare with more recent competing algorithms.
Besides, the experiments further probe the need for learning when dealing with heterogeneous crowds, shedding light on the nature of the problem itself.

\section{Problem Definition}
\label{sec:problem_def}
%
We cast the group detection task as a clustering problem. Consider a set of pedestrian $M = \{a, b, \dots\}$ and $\mathcal{Y}(M)$ as the set of all possible ways to partition $M$. Defining $y$ as a subset of pedestrians (also referred to as group or cluster) in $M$, a generic set of subsets ${\bf y}~=~\{y_1, y_2, \dots\}$ is a valid solution in $\mathcal{Y}(M)$ if the partitioning axioms are satisfied: $\forall a\in\mathcal{M}, \exists!y\in\mathcal{Y}(\mathcal{M}):a\in y$ and $\cup_{y\in\mathcal{Y}(\mathcal{M})}y = \mathcal{M}$.
Here, we call \emph{singletons} those pedestrians whose cluster is composed by themselves only, \emph{i.e.} $|y| = 1$.


In crowded contexts, this grouping cannot be solved by exploiting spatial (positional or orientational) information only, as proposed in F-formation theory, due both to confusion and motion. Moreover, it is often the case that the physical distance between a singleton and a member of a cluster is lower than that cluster intra-member mean distance. This is due to the fact that, in real situations, social aspects heavily intervene in the group formation process.
%
%
%
In order to obtain crowd partitions that are meaningful from a sociological point of view, 
the following relevant properties of social groups
must hold.

{\bf Hierarchical Coherence.}
Groups are composed by individuals and sub-groups in a recursive fashion (Fig.\ref{fig:propertya}). This has been first observed in the seminal work of Canetti~\cite{canetti_crowds_1984}, based on the assumption that members within a group cannot erase already settled relationships as the crowd assembles.

%
%
{\bf Density Invariance.} To keep their group identities preserved at different crowd densities, members must be willing to change the inner distance among them. Groups in very crowded scenes will be more closed and compact, while groups in open spaces will tend to exhibit more dilated patterns (Fig.~\ref{fig:propertyb}); sociologically and empirical evidence can be found in Bandini \emph{et al.}~\cite{bandini_crowd_2012} and in Moussaid~\emph{et al.}~\cite{moussaid_walking_2010}.

{\bf Transitivity.} Not every member of a group needs to be strictly connected with every one else, but any two members may be part of the same group by means of a sufficiently dense subgroup of pedestrian standing between them (Fig.~\ref{fig:propertyc}). McPhail and Wohlstein's work \cite{mcphail_using_1982} formalized this idea: to be considered part of a group one typically will have to be connected with at least half of the members.

\begin{figure}[t!]
	\centering
        \subfloat[]{
                \includegraphics[width=0.3\columnwidth]{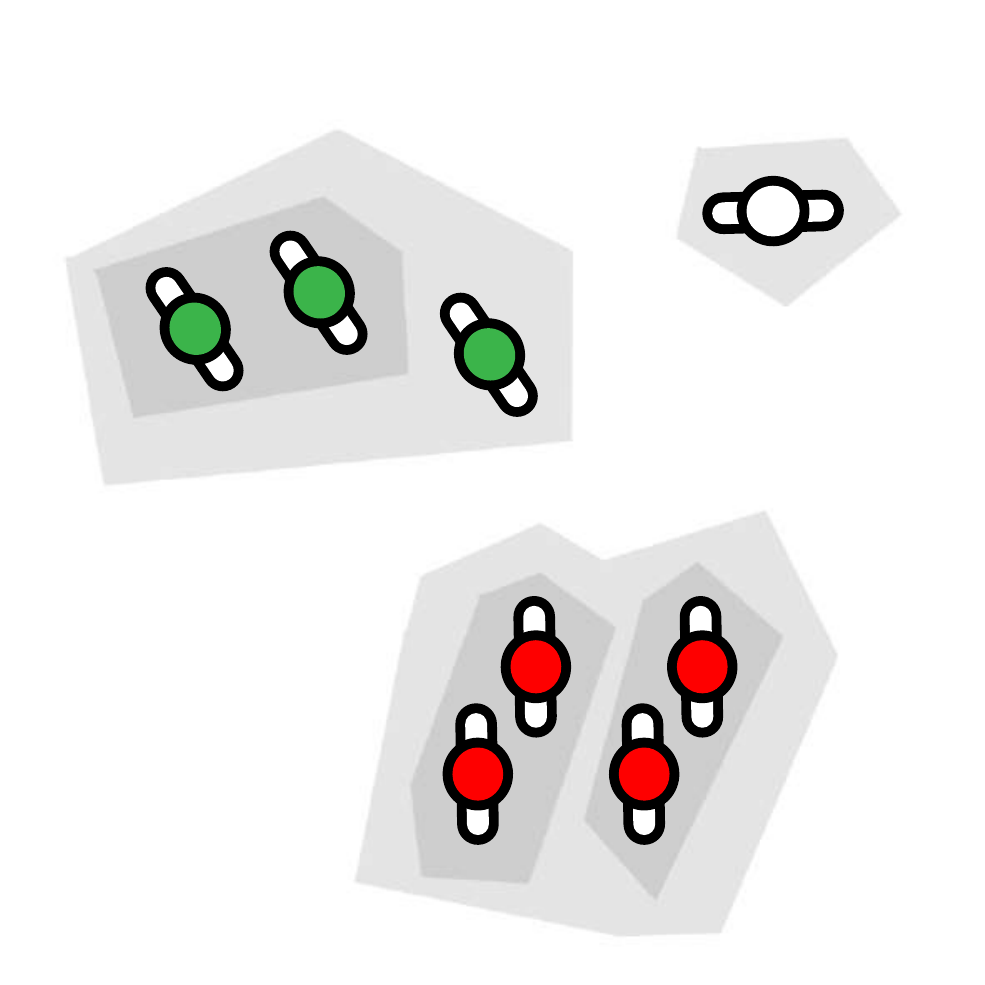}
                \label{fig:propertya}
        }~
		\subfloat[]{
                \includegraphics[width=0.3\columnwidth]{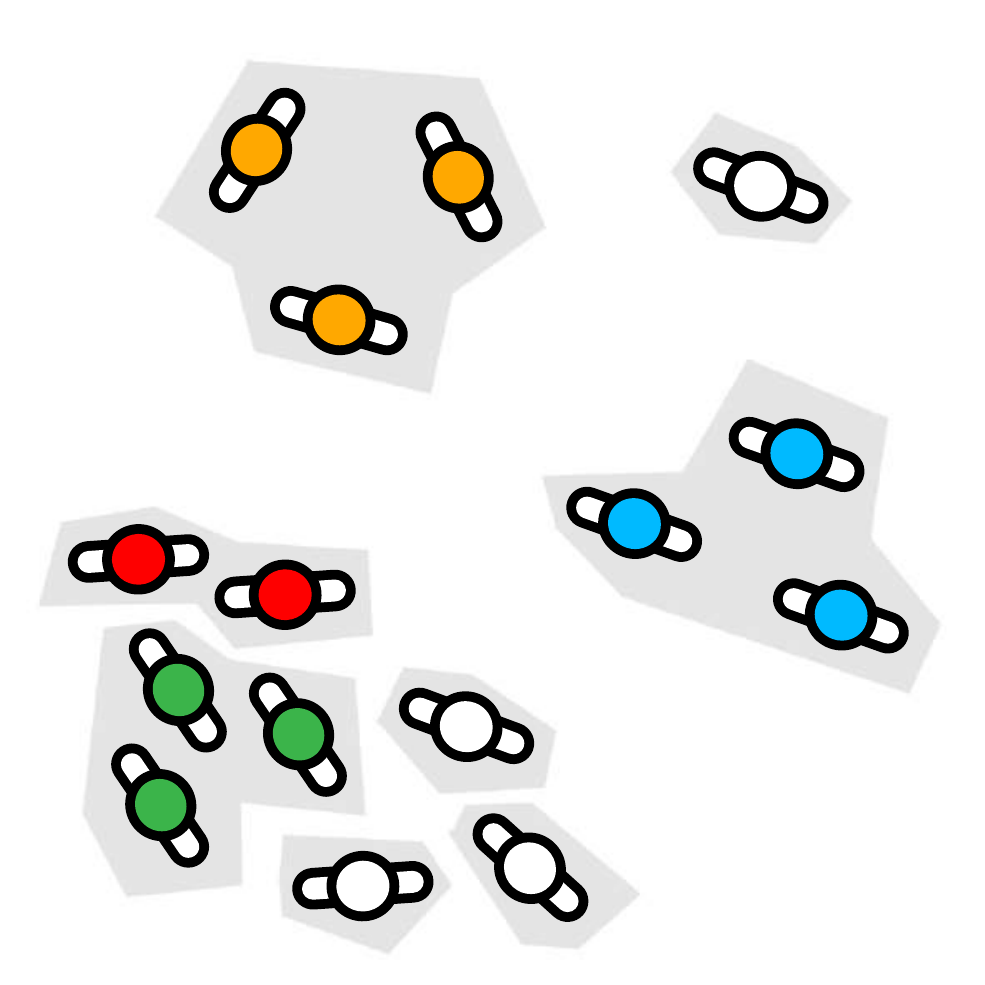}
                \label{fig:propertyb}
        }~
        \subfloat[] {
                \includegraphics[width=0.3\columnwidth]{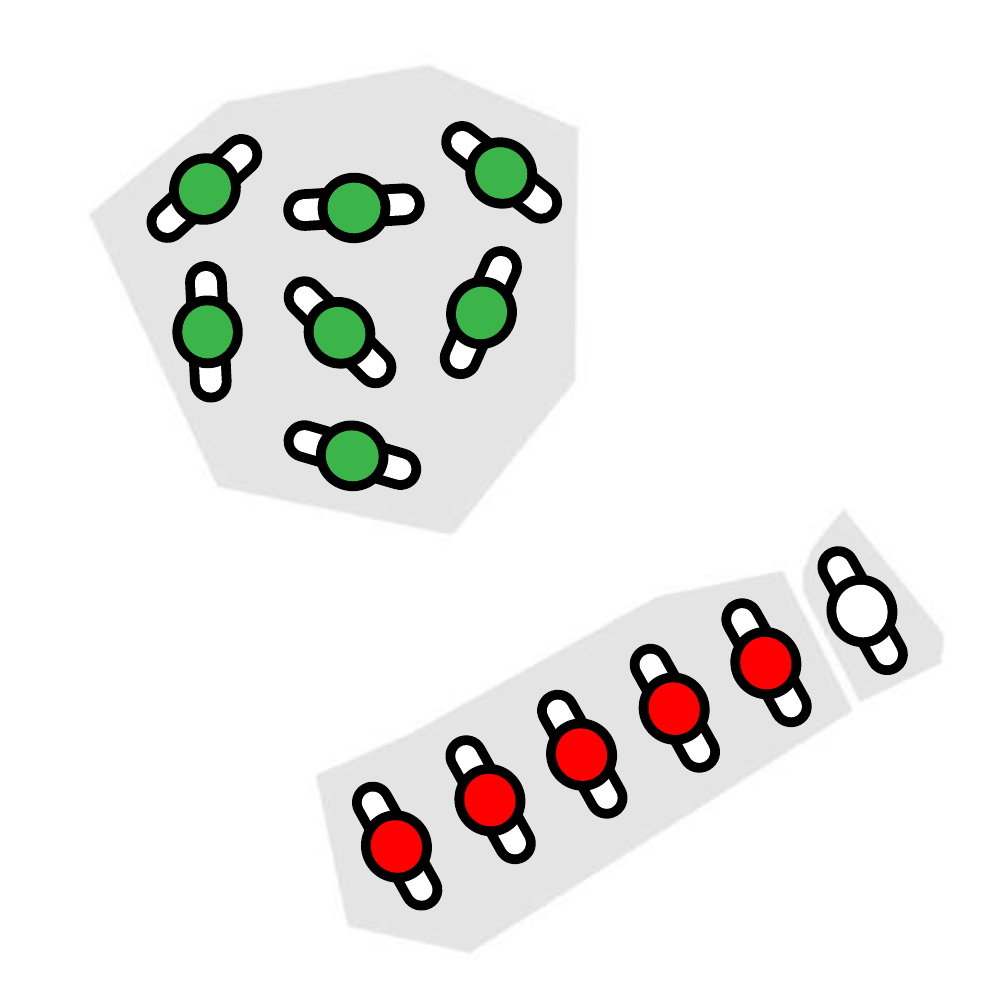}
                \label{fig:propertyc}                
        }
\caption{Highlights of social groups properties: (a) \emph{hierarchical coherence}, (b) \emph{density invariance} and (c) \emph{transitivity}.}  
\end{figure}

\section{Socially Constrained Clustering for Groups Detection}
\label{sec:solution}
We propose to solve the crowd partitioning problem employing the \emph{Correlation Clustering} (CC)~\cite{bansal_correlation_2002} and we prove it is possible to achieve a quasi-optimal crowd partition guaranteed to satisfy the three aforementioned properties of Sec.~\ref{sec:problem_def}.
The CC algorithm takes as input an affinity matrix $W$ where, if $W^{ab}>0$ ($W^{ab}<0$), elements $a$ and $b$ belong to the same (different) cluster with certainty $|W^{ab}|$. The algorithm returns the partition ${\bf y}$ of a set of elements $M=\{a, b, \dots\}$  so that the sum of the affinities between item pairs in the same clusters $y$ is maximized:
\begin{equation}
\label{eq:correlation_clustering_objective}
\text{CC} = \arg\max
_{{\bf y}\in\mathcal{Y}(M)}\sum_{y\in{\bf y}}\sum_{a\neq b\in y}W^{ab}_{\bf d}.
\end{equation}
The pairwise elements affinity in $W$ is parameterized as weighted linear combination of a bounded dissimilarity measure and its complement:
\begin{equation}
\label{eq:cc_affinity_parametrization}
W^{ab}_{\bf d} = {\boldsymbol\alpha}^T ({\bf 1} - {\bf d}(a, b)) - {\boldsymbol\beta}^T {\bf d}(a, b).
\end{equation}
To be consistent
with the definition of groups of Sec.~\ref{sec:intro}, we devise the pairwise distance between pedestrian $a$ and $b$, ${\bf d}(a, b)$
as detailed in Sec.~\ref{sec:features}.

In clustering theory, changing the dissimilarity space results in different partitioning of the domain through the same algorithm.
By tuning $[{\boldsymbol\alpha}, {\boldsymbol\beta}]$ parameters in Eq.~\eqref{eq:cc_affinity_parametrization} we can evaluate many different groupings and we'll show that, under a restrict set of hypothesis, they all satisfy the social properties previously mentioned.
In order to efficiently learn those parameters according to different peculiarities groups exhibit in different scenarios, in Sec.~\ref{sec:learning} we introduce Structural SVM~\cite{tsochantaridis_large_2005} with both an approximated inference procedure and a loss function specifically designed for accurately measuring the compatibility among possible crowd partitions.
\\
The solution to Eq.~\eqref{eq:correlation_clustering_objective}, given the parametrization introduced in Eq.~\eqref{eq:cc_affinity_parametrization} and subject to a hierarchical inference procedure, guarantees the satisfaction of all the social groups properties:
\begin{theorem}
When the pairwise elements affinity in $W$ is a weighted linear combination of a bounded similarity measure and its complement, a bottom-up approximated solution to CC produces a partition that respects the hierarchical coherence, density invariance and transitivity properties of social groups.
\end{theorem}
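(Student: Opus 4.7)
The plan is to prove each of the three social properties separately, exploiting the interaction between the bottom-up approximation of CC and the parameterization of Eq.~\eqref{eq:cc_affinity_parametrization}, rather than attempting a single monolithic argument.

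For \emph{hierarchical coherence} I would argue purely from the structure of the agglomerative inference: starting with every pedestrian as a singleton and merging, at each step, the pair of clusters with highest aggregate affinity produces a dendrogram in which any group is the disjoint union of previously formed sub-groups and in which no already-merged pair is ever split. This is exactly the recursive composition of individuals and sub-groups described by Canetti, so no assumption on the features is needed beyond the agglomerative schedule itself. This should be the easiest step and can be settled by a short invariant-preservation argument on the sequence of merges.

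For \emph{density invariance} I would use the fact that $W^{ab}_{\bf d}$ is a linear combination of a \emph{bounded} dissimilarity ${\bf d}(a,b)$ and its complement, with \emph{learnable} weights $[\boldsymbol\alpha,\boldsymbol\beta]$. Because boundedness decouples the sign of $W^{ab}_{\bf d}$ from the absolute scale of any single feature, the decision threshold between ``same cluster'' and ``different cluster'' is set by the weights rather than by a fixed physical distance, as in~\cite{ge_vision-based_2012,pellegrini_improving_2010}. I would formalize this by showing that for any rescaling of the inter-personal distance the Structural SVM of Sec.~\ref{sec:learning} can recover weights realizing the same partition; compact formations in dense crowds and dilated ones in open spaces are then both expressible in the same family of affinities. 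I expect this to be the main obstacle, because density invariance is genuinely tied to the learner having access to representative training data; the cleanest route is probably to state the property conditionally on boundedness of ${\bf d}$ and expressivity of $[\boldsymbol\alpha,\boldsymbol\beta]$, rather than as an absolute guarantee.

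For \emph{transitivity} I would rely on the global nature of the objective in Eq.~\eqref{eq:correlation_clustering_objective}: the quantity to be maximized is the sum $\sum_{a\neq b\in y}W^{ab}_{\bf d}$ over all pairs inside each cluster, not a per-pair test. Consequently, a pair $(a,c)$ with $W^{ac}_{\bf d}<0$ can still be clustered together whenever a sufficiently dense bridge of members contributes strong positive affinities with both $a$ and $c$, offsetting the negative term. Quantifying this overrun with the McPhail--Wohlstein half-connectivity criterion~\cite{mcphail_using_1982} gives a concrete sufficient condition under which bottom-up CC tolerates non-strictly-connected members, which is precisely the transitivity property. Assembling the three sub-arguments then yields the theorem.
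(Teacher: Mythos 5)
Your treatments of \emph{hierarchical coherence} and \emph{transitivity} coincide with the paper's: the former is exactly the invariant of the greedy agglomerative schedule (singletons, best-merge-first, stop when no merge increases the total affinity), and the latter follows from the fact that the objective in Eq.~\eqref{eq:correlation_clustering_objective} sums affinities over all intra-cluster pairs, so a negative pair can be outweighed by positive ``bridge'' contributions; your McPhail--Wohlstein quantification is an optional embellishment the paper does not attempt.

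The genuine gap is in \emph{density invariance}. You propose to prove it through the expressivity of the learnable weights --- for any rescaling of inter-personal distances the Structural SVM could recover new weights $[\boldsymbol\alpha,\boldsymbol\beta]$ realizing the same partition --- and you concede the result would then only be conditional on representative training data. That is not the statement of the theorem, which asserts the property for the clustering procedure itself, with the weights \emph{fixed}; appealing to retraining both weakens the claim and misses the much simpler argument the parameterization was designed to enable. The paper's proof is purely algebraic: write ${\bf d}=\lambda\bar{\bf d}$ with $\bar{\bf d}:M\times M\rightarrow[0,\tfrac{1}{\lambda}]^p$, so that
\begin{equation}
W_{\bf d}={\boldsymbol\alpha}^T({\bf 1}-\lambda\bar{\bf d})-{\boldsymbol\beta}^T\lambda\bar{\bf d}
=\lambda\bigl[{\boldsymbol\alpha}^T(\tfrac{1}{\lambda}-\bar{\bf d})-{\boldsymbol\beta}^T\bar{\bf d}\bigr]=\lambda W_{\bar{\bf d}},
\end{equation}
i.e.\ uniformly rescaling all distances multiplies \emph{every} pairwise affinity, hence the total affinity of every candidate partition (and of every greedy merge decision), by the same positive constant $\lambda$, leaving the maximizer unchanged. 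This is precisely Kleinberg's scale-invariance axiom for sum-of-pairs clustering, transported from distance minimization to affinity maximization, and it requires no reference to the learning framework at all. Your intuition that boundedness ``decouples the sign of $W^{ab}_{\bf d}$ from the absolute scale'' gestures at this but is never turned into the explicit factorization, so as written your density-invariance step would not establish what the theorem claims.
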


\begin{proof}
Let ${\bf d}:M\times M\rightarrow[0, 1]^p$ be a bounded distance on the set of members of a crowd $M$ so that $(M, {\bf d})$ is a dissimilarity space and suppose the affinity matrix of CC is constructed as in Eq.~\eqref{eq:cc_affinity_parametrization}, for some appropriate positive values of ${\boldsymbol\alpha}, {\boldsymbol\beta} \in \mathbb{R}^p$. To demonstrate that the \emph{density invariance} holds for all solutions of CC consider that when the density increases, both distances between groups and between members of the same group diminish. This phenomenon is a less formal statement of the scale invariance axiom of clustering defined in Kleiberg \cite{kleinberg_impossibility_2002} which is known to hold for sum-of-pairs clustering algorithm.
We must thus show that it holds when we are maximizing affinities instead of minimizing distances as well. To this aim let ${\bf d} = \lambda\bar{\bf d}$ and $\bar{\bf d}:M\times M\rightarrow[0, \frac{1}{\lambda}]^p$ so that
\begin{equation}
\begin{aligned}
        W_{\bf d} &= {\boldsymbol\alpha}^T ({\bf 1} - \lambda\bar{\bf d}) - {\boldsymbol\beta}^T \lambda\bar{\bf d}\\
        		  &= \lambda[{\boldsymbol\alpha}^T(\frac{1}{\lambda} - \bar{\bf d})-{\boldsymbol\beta}^T\bar{\bf d}] = \lambda W_{\bar{\bf d}},
\end{aligned}
\end{equation}
where the notation for the elements is dropped for clarity. Consequently, CC satisfies the scale invariance axiom since multiplying all distances by a constant results in multiplying the total affinity of each cluster by a constant and hence the maximum affinity clustering solution is not changed. \emph{Transitivity} follows directly from the objective function of CC in Eq.~\eqref{eq:correlation_clustering_objective}:
to be assigned to the same group it suffices the existence of any number of members such that the net effect of all the involved pairwise relations is non-decreasing.
Last, the \emph{hierarchical coherence} requires a greedy approximation algorithm to optimize the CC that initially consider each pedestrian in its own cluster and then iteratively merges the two clusters whose union would produce the best clustering score, stopping when joining clusters would decrease the overall affinity.
Hence, elements in the same cluster at lower levels of the hierarchy are also together in higher level clusters.
\end{proof}


\begin{figure*}[t!]
\centering
        \subfloat[Physical distance]{
                \includegraphics[width=0.22\textwidth]{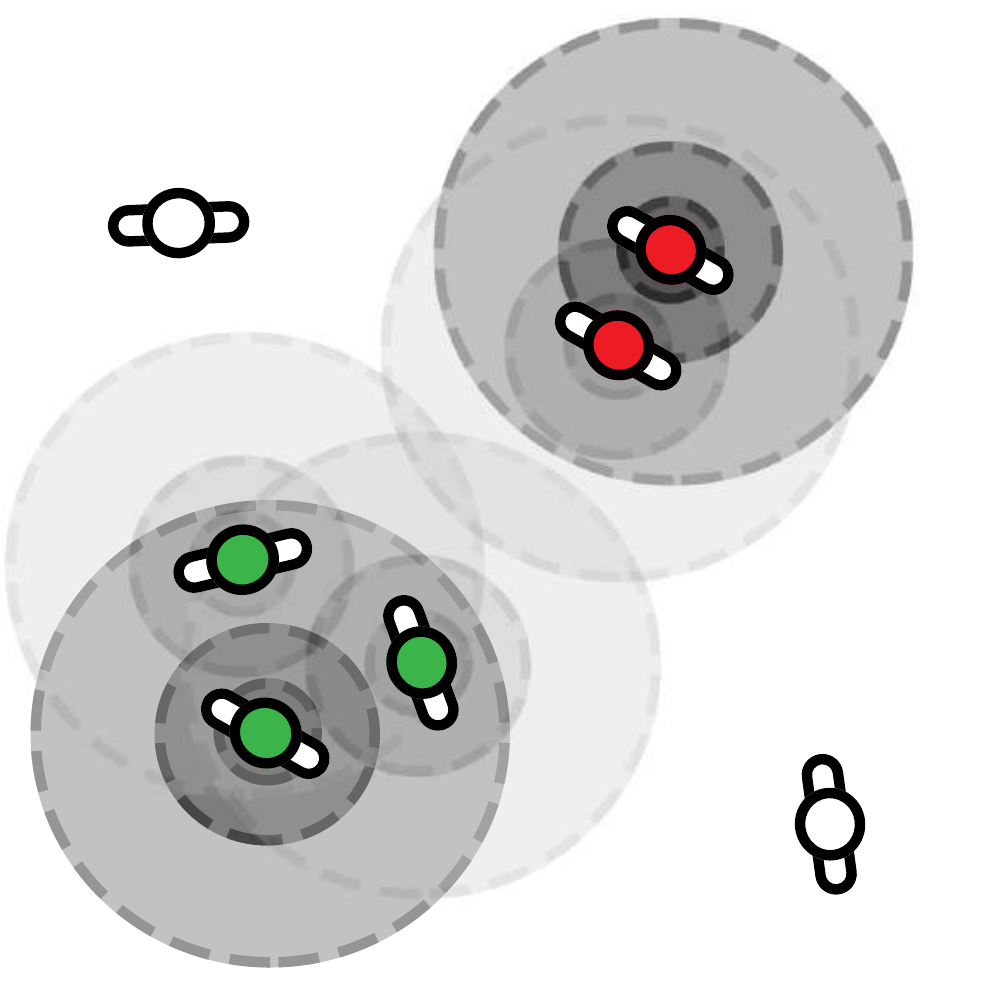}
                \label{fig:d_ph}
        }
        ~
        \subfloat[Motion causality]{
                \includegraphics[width=0.22\textwidth]{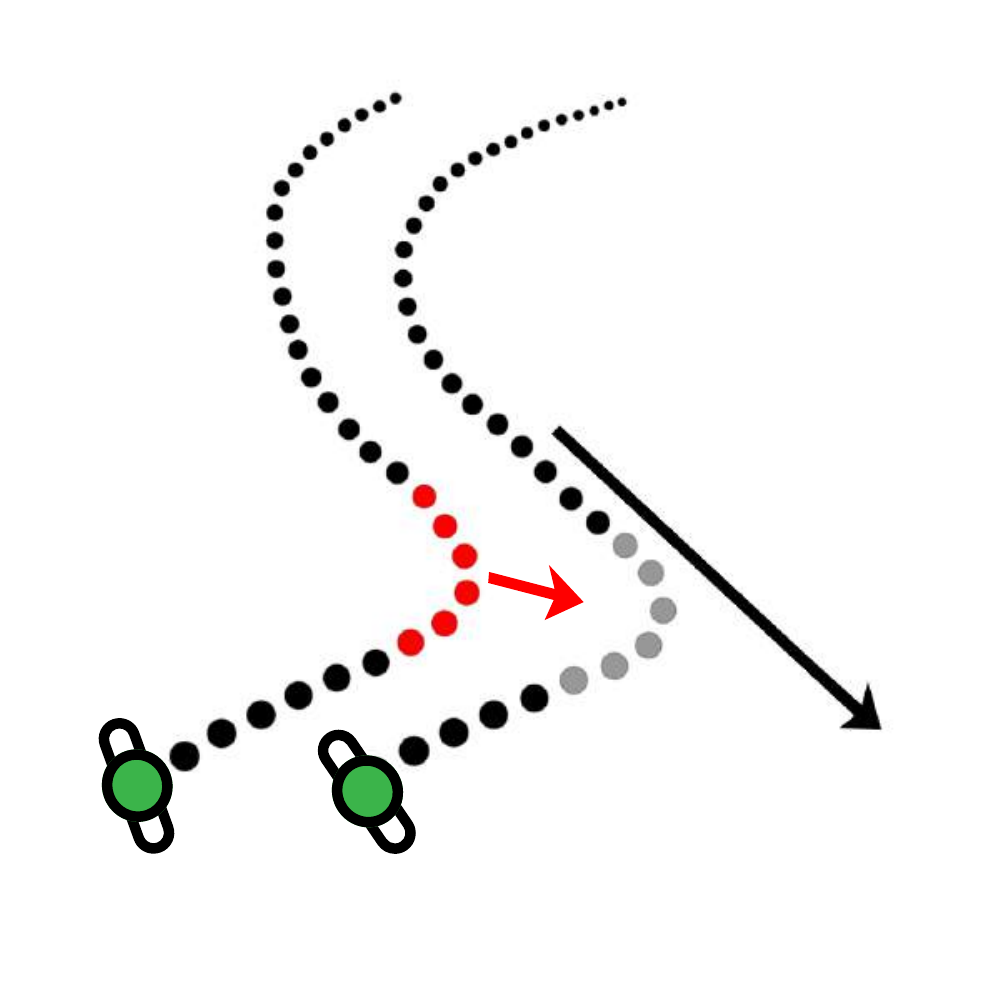}
                \label{fig:d_ca}
        }
        ~
        \subfloat[Trajectory shape]{
                \includegraphics[width=0.22\textwidth]{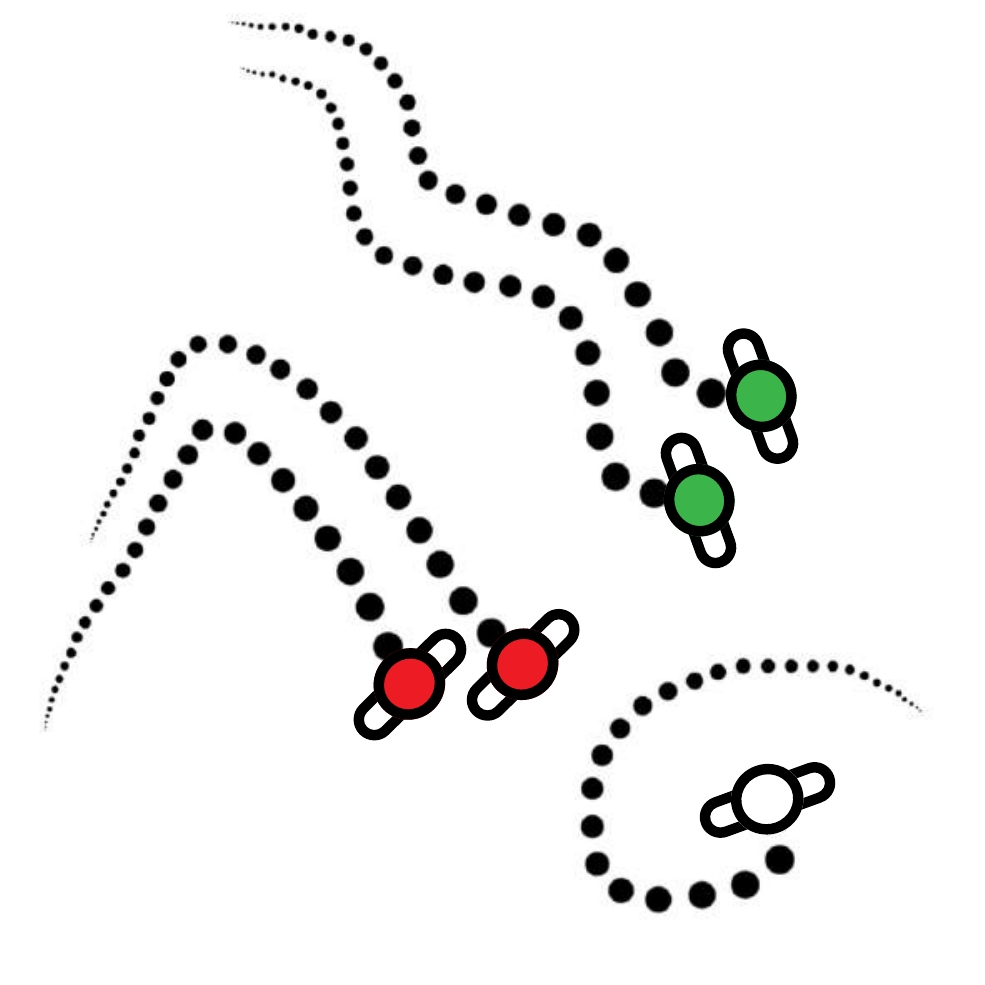}
                \label{fig:d_sh}
        }
        ~
        \subfloat[Paths convergence]{
                \includegraphics[width=0.22\textwidth]{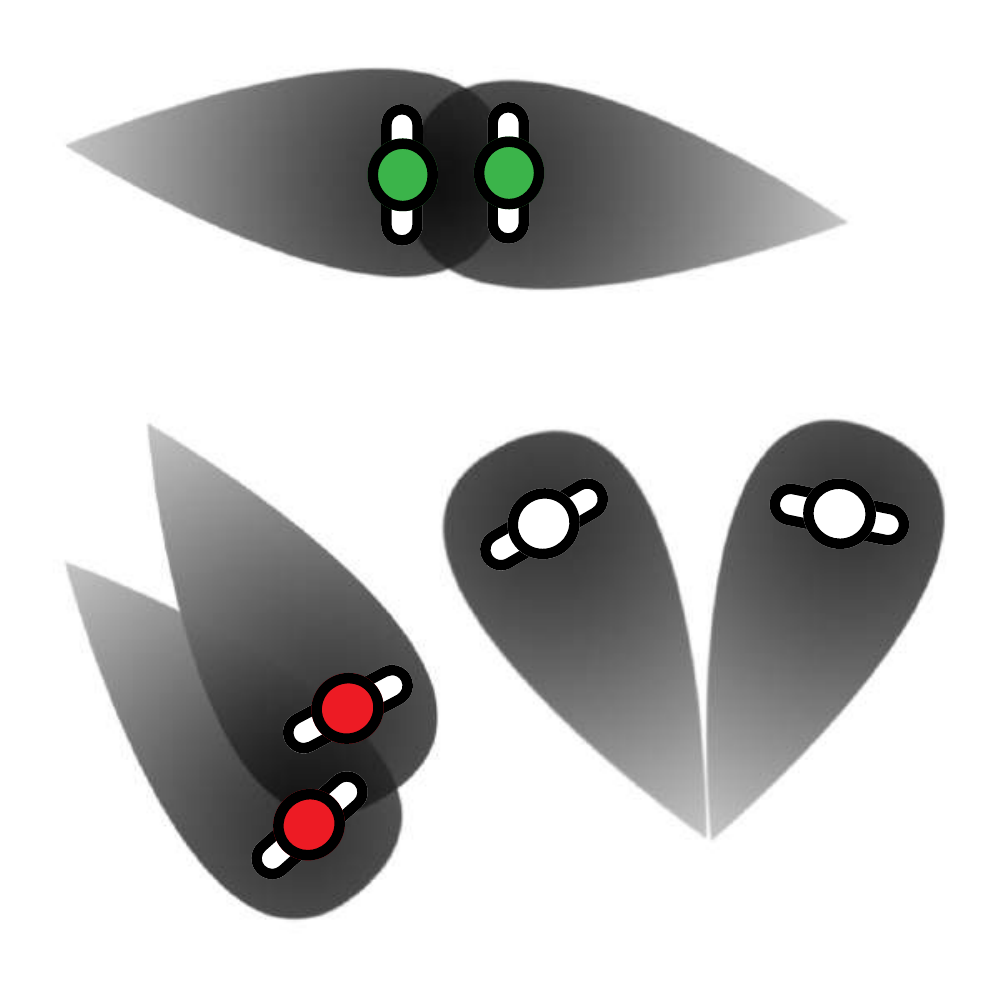}
                \label{fig:d_he}
        }
        \caption{Features: physical identity (a) and social identity (b,c) provide a computational interpretation of the concept of group membership, while (d) evaluates the likeliness of the existence of a shared goal between pedestrians.}
        \label{fig:features}
        \vspace{-0.5cm}
\end{figure*}

\section{Social Features for Social Groups}
\label{sec:features}
Given the problem formulation in Sec.~\ref{sec:problem_def} and the CC parametrization of Eq.~\eqref{eq:cc_affinity_parametrization}, here we define the distance function ${\bf d}$ which acts on trajectories pairs.
We consider the pedestrian trajectory $T_a = \{(t, {\bf p}_a^t)\}_t$, projected onto the ground plane, as multivariate time series of metric (in meters) spatial observations ${\bf p}_a^t$ for pedestrian $a$ at different times $t$.
In order to deal with the continuously changing nature of groups (splitting, merging, switching members, $\dots$) we reduce the observation period to a time window $\mathcal{T}$ of fixed length. As a consequence, groups can be differently detected even between (potentially overlapped) sequential time windows $\mathcal{T}_k$ and $\mathcal{T}_{k+1}$.

According to Def.~\ref{def:group},
we devise four features able to capture both the pedestrian physical and social identity as well as to discern the presence of a shared goal among them,
namely: \textit{physical identity} $d_\text{ph}$, \textit{trajectories shape-similarity} $d_\text{sh}$, \textit{pedestrians causality} $d_\text{ca}$ and \textit{heat-maps} $d_\text{hm}$.
%
%
A pairwise feature vector ${\bf d}^k(a, b)$ is hence defined for every couple of trajectories $T_a$ and $T_b$ and for every time window $\mathcal{T}_k$, as
\begin{equation}
{\bf d}(a,b)\stackrel{\text{\tiny def}}{=} {\bf d}^k(a,b) = [d_\text{ph}, d_\text{sh}, d_\text{ca}, d_\text{he}]_{a,b}^k.
\end{equation}

%

\subsection{From Physical Distances to Physical Identity}
\begin{figure}[t]
\center
	\subfloat[]{
		\includegraphics[width=0.60\columnwidth]{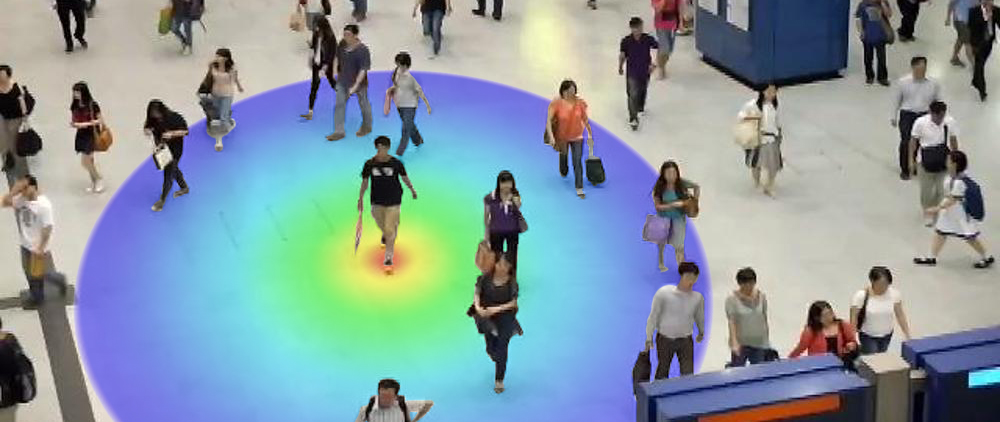}
	}
	\subfloat[]{
		\includegraphics[width=0.30\columnwidth]{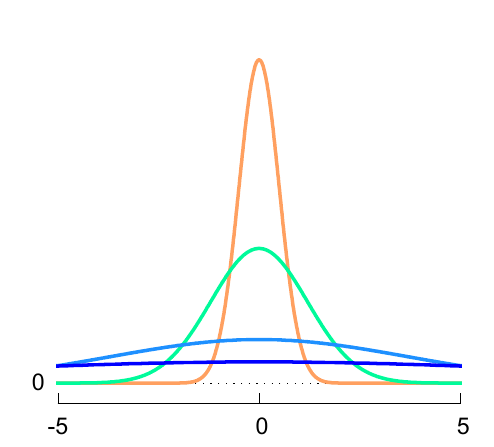}
	}
\caption{Proxemics, modeled by gaussians (b), reveal physical identity trough physical distance (a).}
\label{fig:GMM}
\end{figure}

The \emph{physical identity} can be regarded as a static relation connecting physical distance to group membership.
In his \emph{Proxemic Theory}, Hall~\cite{hall66} focused on the physical interactions between pairs of individuals. More precisely, the theory is about ``the study of ways in which man gains knowledge of the content of other men's minds through judgments of behaviour patterns associated with varying degrees of proximity to them.''

\begin{table}[t!]
\center
\caption{Proxemics characterization as found in Hall's Theory.}
\begin{tabular}{|l|c|l|}
\hline
\multicolumn{1}{|c|}{\bfseries space} & \multicolumn{1}{c|}{\bfseries boundaries ($m$)} & \multicolumn{1}{c|}{\bfseries description}\\
\hline 
intimate & 0.0 - 0.5 & unmistakable involvement\\
\hline
personal & 0.5 - 1.2 & familiar interactions\\
\hline
social & 1.2 - 3.7 & formal relationships\\
\hline
public & 3.7 - 7.6 & non-personal interactions\\
\hline
\end{tabular} 
\label{tab:prox}
\end{table}

The proxemic model fomalizes how people use physical space in interpersonal interactions and 
defines a set of concentric bubbles around every individual, as depicted in Fig.~\ref{fig:d_ph}.
Nevertheless, the transition between the four different proxemic zones is abrupt (Tab.~\ref{tab:prox}).
%

%
%
%
Spatial quantization can be heavily affected by noise or errors, leading to wrong classification.
Several approaches assign a score to proxemic classes in order to obtain a continuous real-valued similarity measure, \cite{6239351,6113127,vizzari12}.
To grasp the distance based characteristics of group formation, we relax the original Hall's quantization by employing a Gaussian Mixture Model (GMM) on the ground plane, centered on person location and with fixed proxemics-inspired covariance matrices.
The resulting GMM is a weighted sum of zero mean Gaussians with diagonal covariance matrices reflecting Hall's boundaries (\emph{i.e.} $\Sigma_1\leftarrow0.5$, $\Sigma_2\leftarrow1.2$, \ldots):
\begin{equation}
\text{GMM}({\bf p}_a^t-{\bf p}_b^t)=\frac{1}{4}\sum\limits_{z=1}^4 \mathcal{N}( {\bf p}_a^t-{\bf p}_b^t \vert 0,\Sigma_z)
\label{eq:GMM}
\end{equation}
Given a pair of trajectories $T_a$ and $T_b$ we evaluate the mixture model of Eq. \eqref{eq:GMM} on the vector of distances at each time instance.
This is equivalent to place the mixture on ${\bf p}_a^t$ and measure where the point ${\bf p}_b^t$ lies inside the proxemic space at each instant $t$, as shown in Fig.~\ref{fig:GMM} and in Fig.~\ref{fig:prox2}.

The static measure of social cohesion, called $d_\text{ph}$, is then defined by averaging the mixture model responses over the the set of time instances where trajectories $T_a$ and $T_b$ are simultaneously present in the current time window, $\overline{\mathcal{T}}\subseteq\mathcal{T}^k$:
\begin{equation}
d_\text{ph}^k(a,b) = \frac{1}{|\overline{\mathcal{T}}|}\sum_{t\in\overline{\mathcal{T}}}\text{GMM}({\bf p}_a^t-{\bf p}_b^t)
\end{equation}
Averaging is required since the physical identity among group members is established in time and must remain coherent in order to be a valid measure of social cohesion.
%

\subsection{Motion as an Indicator of Social Identity}
\emph{Social Identity}~\cite{haslam2004psychology,turner81} is a psychological paradigm built on the intuition that group behavior is an emerging dynamic, reflecting a shift in self-conception of the members who start to define themselves in terms of their common membership.
According to~\cite{Oldmeadow05task-groupsas}, social identity
reflects in the way people mutually influence each other and consequently move in groups, suggesting that
social identity could be observed through trajectories shape similarity and paths temporal causality.

\subsubsection{Temporal Causality}
Under the hypothesis of sufficiently stationary trajectories, which is typically true for the observation of a time window, we can employ the econometric model of Granger causality~\cite{granger_investigating_1969} to measure to what extent pedestrians are mutually affecting their motion paths~\cite{couzin_effective_2005}. Accordingly, we formalize two requirements:
\begin{enumerate}
\item the causal pedestrian will move before the effect pedestrian, and
\item the motion of the causal pedestrian contains information about the way the effect pedestrian moves that cannot be found in any other pedestrian motion.
\end{enumerate}
A consequence of these statements is that the causal pedestrian trajectory can help forecast the effect pedestrian trajectory even after other data has first been used. Let's define $m$ as the lag value for the causality analysis and denote the optimum least-squares predictor of a stationary trajectory $T_a$ at time $t$ using the set of values $\bar{T}_a(t-m)$ by $P_t(T_a|\bar{T}_a(t-m))$. Here $\bar{T}_a(t-m)$ is all the information about trajectory $T_a$ accumulated since time $t-m$ (inside the current time window $\mathcal{T}^k$) up to time $t-1$. The predictive error series will be denoted by $\varepsilon_t(T_a|\bar{T}_a(t-m)) = T_a(t) - P_t(T_a|\bar{T}_a(t-m))$ and define $\sigma^2(T_a|\bar{T}_a(t-m))$ as the variance of $\varepsilon_t(T_a|\bar{T}_a(t-m))$. It is said trajectory $T_b$ \emph{Granger causes} $T_a$, briefly $b\rightarrow a$, if
\begin{equation}
\sigma^2(T_a|\bar{T}_a(t-m)) > \sigma^2(T_a|\bar{T}_a(t-m), \bar{T}_b(t-m))
\end{equation}
\noindent The feature is then derived from a specific testing procedure used to evaluate Granger causality trustworthiness.
Let's introduce the sum of squared residuals for the constrained and unconstrained models as
\begin{equation}
\begin{aligned}
RSS_c &= \sum_{t=1}^{K}\varepsilon_t(T_a|\bar{T}_a(t-m))^2 \quad\text{and}\quad\\
RSS_u &= \sum_{t=1}^{K}\varepsilon_t(T_a|\bar{T}_a(t-m), \bar{T}_b(t-m))^2,
\end{aligned}
\end{equation}
where $K$ is the number of samples considered for the analysis.
\begin{figure}
	\centering
	\includegraphics[width=\columnwidth]{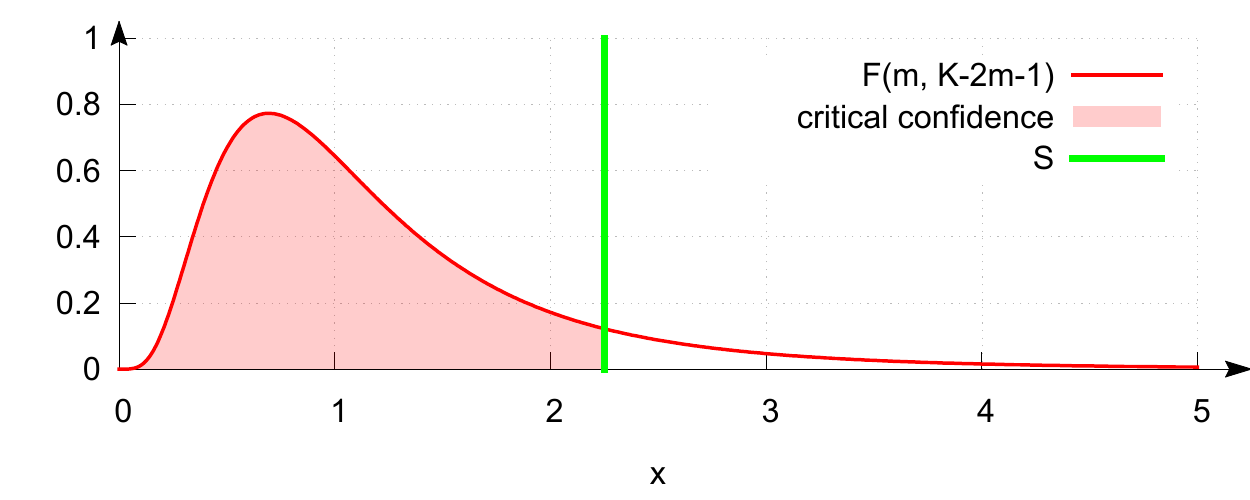}
	\caption{Visual example of causality probability. The vertical line is the $S$ of Eq.~\eqref{eq:ca_S} while the shaded area is $d_\text{ca}$.}
	\label{fig:caus_S}
\end{figure}
We design our feature $d_\text{ca}$ so as to be the critical confidence measure of the hypothesis that Granger causality exists between $T_a$ and $T_b$. To this end, we consider the test statistic
\begin{equation}
\label{eq:ca_S}
S_{b\rightarrow a} = \frac{(RSS_c-RSS_u)/m}{RSS_u/(K-2m-1)}.
\end{equation}
and compute the area under the Fisher-Snedecor probability function $\mathcal{F}$ to the left of $S$, as shown in Fig.~\ref{fig:caus_S}. This results in the following closed form solution~\cite{hazewinkel_encyclopaedia_1989} integral:
\begin{equation}
d^k_\text{ca}(a,b) = \max_{S\in\{S_{b\rightarrow a}, S_{a\rightarrow b}\}}\int_{0}^{S} \mathcal{F}(x\vert m, K-2m-1)\mathrm{d}x,
\end{equation}
where $S_{b\rightarrow a}$ and $S_{a\rightarrow b}$ are both considered in order to obtain symmetry, but as we value the existence of causality over its direction, we only keep the one which maximize the probability.

\subsubsection{Shape Similarity}
Shape similarity may also be useful in describing social identity as it overcomes the limit of the proxemics punctual and static evaluation.
We use the Dynamic Time Warping (DTW)~\cite{berndt_using_1994} on euclidean coordinates to map one time series to another by minimizing the distance between the two. In particular, DTW flexibility allows two time series that are similar but locally out of phase to align in a non-linear manner.
Suppose we have two trajectories $T_a$ and $T_b$ of lengths $§A$ and $B$ respectively. 
To align these two sequences using DTW, we first construct a distance matrix $\{D^{ij}_{ab}\}_{ij}\in\mathbb{R}^{A\times B}$ that encodes the squared euclidean distance between any $i$-th element of $T_a$ and $j$-th element of $T_b$ inside the current time window.

The best alignment can be found by a recursive minimization of the cumulative cost $\gamma_{ab}$ of any path through the distance matrix originating in $D_{ab}^{11}$:
\begin{equation}
\gamma_{ab}(i, j) = D^{ij}_{ab} + \min\{\gamma_{ab}(i\text{-}1, j), \gamma_{ab}(i\text{-}1, j\text{-}1), \gamma_{ab}(i, j\text{-}1)\}.
\end{equation}
In particular, we construct our feature to be the distance of the two sequences once they are optimally aligned, that is the sum of the Euclidean distances of associated points of $T_a$ and $T_b$:
\begin{equation}
d_\text{sh}(a,b) = \gamma_{ab}(A, B)/\max(A,B)
\end{equation}
where the denominator is the optimal warping path length used as a normalization factor.

\subsection{Common Goals from People Motion}
\label{sec:heatmaps}
Previously described features focus on both static and dynamic aspect of trajectories when groups are already established, but neglect the smooth process of group formation. People may merge in groups starting from different location (\emph{e.g.} meeting action) or groups may split into subgroups and singletons (according to the \textit{hierarchical coherence} property of group formation).
Meeting or being close for a sufficient amount of time may indicate the presence of a shared goal. Following the results in \cite{lin_heat-map-based_2013}, where heat maps were used to recognize group activities, we also employ a heat map inspired feature to holistically model groups.

A heat map $H_a:\mathbb{N}_R\times\mathbb{N}_C\rightarrow[0,1]$ associated to the trajectory $T_a$ is a $R$-by-$C$ grid of heat sources $h_a$ that partitions the ground plane. The heat source $h_a(i,j)$ activates if the trajectory $T_a$ happens to walk in the relative grid cell $(i,j)$ and once activated it is subject to thermal decay and thermal diffusion processes:
\begin{equation}
H_a(i,j) = \sum_{p=1}^R\sum_{q=1}^C E_a(p, q) \cdot e^{-k_s\|(p-i,q-j)\|},
\end{equation}
where $k_s$ is a parameter suggesting the relative importance of different patches at different distances and $E_a(p, q)$ is the thermal energy produced by $T_a$ on the patch $(p, q)$. If we let $\bar{E}_a(p, q)$ be the accumulated thermal energy, we have
\begin{equation}
E_a(p, q) = \bar{E}_a(p, q)\cdot e^{-k_rt_\text{int}},
\end{equation}
being $k_r$ a parameter regulating the slow down of the heat accumulation and dispersion and $t_\text{int}$ the duration of the interaction between pedestrian $a$ and cell $(p,q)$ inside the current time window $\mathcal{T}^k$.

\begin{figure}[t!]
\centering
	\includegraphics[width=0.93\columnwidth]{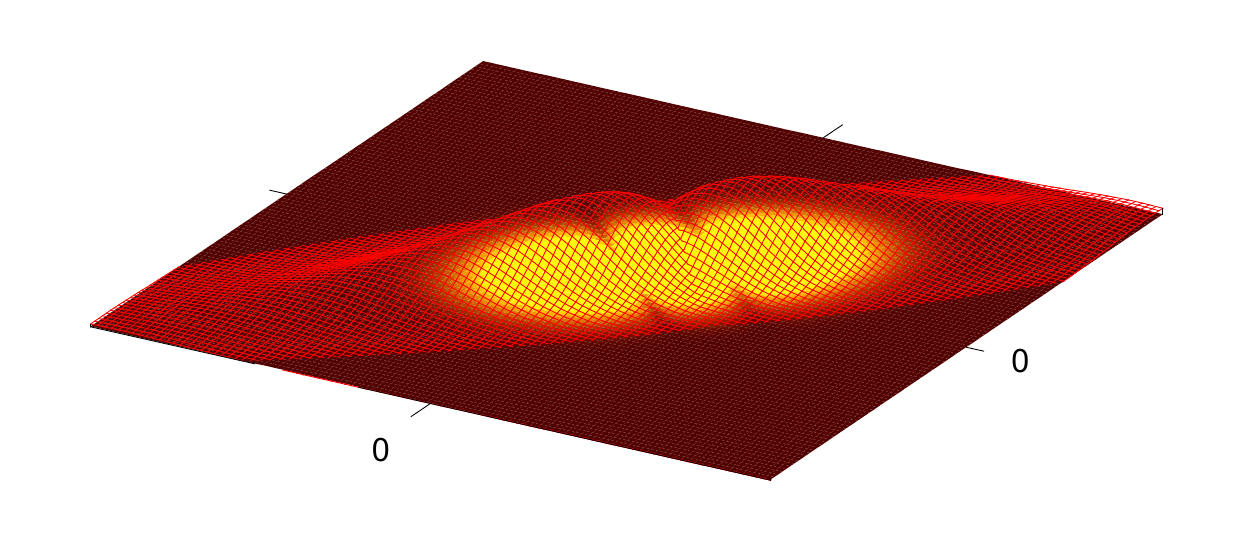}
	\caption{Intersecting heat maps are generated by converging trajectories, which project on the $xy$ plane their shared goal.}
	\label{fig:hm_feature}
\end{figure}

Once we have constructed heat maps for every trajectory, we define a similarity metric between two trajectories $T_a$ and $T_b$ as the volume under the combined heat surface $\Upsilon_{ab}$ obtained as the pointwise product of the two heat maps $H_a$ and $H_b$:
\begin{equation}
d^k_\text{he}(a,b) = \sum_{i=1}^R\sum_{j=1}^C \Upsilon_{ab}(i,j) = \sum_{i=1}^R\sum_{j=1}^C H_a(i,j)H_b(i,j)
\end{equation}
The volume under $\Upsilon_{ab}$ reveals to what extent $T_a$ and $T_b$ have been close in space during the observation period, something that proxemics could already measure indeed. Nevertheless, heat maps relax the constraint by which only elements from the same frame can be compared, in practice this is accomplished through the thermal diffusion process.
At the same time, heat maps also expose the history of their respective trajectories, allowing the metric to capture the temporal aspect of motion similarity.
Proxemics, DTW and Granger causality would rate two pedestrians meeting and parting ways analogously, even if the former case is more likely to represent a group formation process.
Recognizing motion trajectories also encode temporal information is a great advantage of heap maps based analysis.

\section{Learning Framework}
\label{sec:learning}
The linear parametrization of the affinity matrix $W_{\bf d}$ of Eq.~\eqref{eq:cc_affinity_parametrization} guarantees to reach a partition of the crowd which is consistent with the social groups properties. The parameters ${\bf w} = [{\boldsymbol\alpha}, {\boldsymbol\beta}]$ govern both the importance of each feature alone and their similarity/dissimilarity optimal combinations, resulting in different clustering rules.

The choice of the best rule should account for all factors affecting the group formation process, such as environmental constraints or cultural influences. 
The complexity of explicitly evaluating these factors resides in the impossibility to directly observe them. Still, we can gain important insights by observing the grouping process. On these premises, we adopt a learning framework capable of choosing the most suitable clustering rule by finding a set of feature weights that implicitly embodies these non-observable aspects.


\subsection{Supervised CC Through Structured Learning}

Let us consider the input ${\bf x}_i = \{[{\bf 1} - {\bf d}^i(a,b); {\bf d}^i(a,b)]\}_{a,b}$ to be the set of pairwise features computed on all the possible pairs of trajectories $T_a$ and $T_b$ in the $i$-th temporal window and ${\bf y}_i$ the clustering solution, \emph{i.e.} the set of all social groups appearing in the crowd $M_i$. Since ${\bf y}_i$ cannot be described by a single valued function, we adopt the
Structural SVM~\cite{tsochantaridis_large_2005} framework to model and learn predicting the solution.
The goal is to learn a classification mapping $f:\mathcal{X}\rightarrow\mathcal{Y}$ between input space $\mathcal{X}$ and structured output space $\mathcal{Y}$ given a set of input-output pairs $\{({\bf x}_1, {\bf y}_1),\dots,({\bf x}_n, {\bf y}_n)\}$.
A discriminant score function $F:\mathcal{X}\times\mathcal{Y}\rightarrow\mathbb{R}$ is defined over the joint input-output space and $F({\bf x}, {\bf y})$ can be interpreted as measuring the compatibility of ${\bf x}$ and ${\bf y}$. Now, the prediction function $f$ can be defined as
\begin{equation}
\label{eq:pred_fun}
f({\bf x})~=~\arg\max_{{\bf y}\in\mathcal{Y}({\bf x})}F({\bf x}, {\bf y})
\end{equation}
where the maximizer over the label space $\mathcal{Y}({\bf x})$ is the predicted label, \emph{i.e.} the solution of the inference problem. 
For simplicity we choose to restrict the space of $F$ to linear functions over some combined feature representation $\Psi({\bf x}, {\bf y})$ subject to a ${\bf w}$ parametrization. This feature mapping cannot be defined out of the context of the problem, as it is the problem itself that specifies, given a particular input, the nature of the desired solution. Following the definition of correlation clustering in Eq.~\ref{eq:correlation_clustering_objective} and its parametrization introduced in Eq.~\ref{eq:cc_affinity_parametrization}, the compatibility of an input-output pair is directly described as
\begin{equation}
F({\bf x}, {\bf y}; {\bf w}) = {\bf w}^T\Psi({\bf x}, {\bf y}) = {\bf w}^T\sum_{y\in{\bf y}}\sum_{a\neq b\in y}{\bf x}^{ab}.
\end{equation}
The problem of learning in structured and interdependent output spaces can been formulated as a maximum-margin problem. We adopt the $n$-slack, margin-rescaling formulation:
\begin{equation}
\begin{aligned}
& \min_{{\bf w}, {\bf \xi}}
& & \frac{1}{2}\|{\bf w}\|^2+\frac{C}{n}\sum_{i=1}^n\xi_i \\
& \,\,\,\,\,\text{s.t.}
& & \forall i:\xi_i\ge0, \\
&&& \forall i,\forall{\bf y}\in\mathcal{Y}({\bf x}_i)\backslash{\bf y}_i:{\bf w}^T\delta\Psi_i({\bf y})\ge\Delta({\bf y}, {\bf y}_i)-\xi_i,
\end{aligned}
\label{optpro}
\end{equation}
where $\delta\Psi_i({\bf y}) \overset{\text{def}}{=} \Psi({\bf x}_i, {\bf y}_i) - \Psi({\bf x}_i, {\bf y})$, $\xi_i$ are the slack variables introduced in order to accommodate for margin violations, $\Delta({\bf y}_i, {\bf y})$ is the loss function further defined in Sec.~\ref{sec:loss_score} and $C$ is the regularization trade-off. Intuitively, we want to maximize the margin and jointly guarantee that for a given input, every possible output result is considered  worst than the correct one by at least a margin of $\Delta({\bf y}_i, {\bf y})-\xi_i$, where $\Delta({\bf y}_i, {\bf y})$ is bigger when the two predictions are known to be more different.

Remarkably, correlation clustering doesn't need to know in advance how many groups are present in the scene. Moreover, a positive overall cluster score can group two elements even if their affinity measure is negative, implicitly modeling the transitive property of relationships in groups, as stated in Sec.~\ref{sec:problem_def}.

\subsection{Batch Sequential Optimization}
The quadratic program (QP)~\eqref{optpro} introduces a constraint for every possible wrong clustering of the $n$ examples, more precisely $\sum_{i=1}^n(|\mathcal{Y}({\bf x}_i)|-1)$. Unfortunately, the number of ways to partition a set $M$ scales more than exponentially with the number of items according to the Bell sequence~\cite{rota_number_1964}
\begin{equation}
|\mathcal{Y}(M)| = \sum_{i=0}^{|M|}\frac{1}{i!}\sum_{j=0}^i(-1)^{i-j}{k\choose j} j^{|M|},
\end{equation}
making the optimization intractable. As an example, for a crowd composed of 20 pedestrians the number of potential solutions would be about $5.8\cdot 10^{12}$. In order to deal with this high number of constraints many approximation schemes have been proposed, where cutting plane algorithms or subgradient methods
are among the most commonly used. In particular, all the constraints of QP~\eqref{optpro} can be replaced by $n$ piecewise-linear ones by defining the structured hinge-loss:
\begin{equation}
\widetilde{H}({\bf x}_i) \overset{\text{def}}{=} \max_{{\bf y}\in\mathcal{Y}}\Delta({\bf y}_i, {\bf y}) - {\bf w}^T\delta\Psi_i({\bf y}).
\label{eq:maxoracle}
\end{equation}
The computation of the structured hinge-loss for each element $i$ of the training set, described in Sec.~\ref{sec:oracle}, amounts to finding the most ``violating'' output ${\bf y}$ for a given input ${\bf x}_i$ and its correct associated output ${\bf y}_i$.
We only have $n$ constraints of the form $\xi_i \geq \tilde{H}({\bf x}_i)$ and the non-smooth version of QP~\eqref{optpro} reduces to
\begin{equation}
\begin{aligned}
& \min_{{\bf w}}
& & \frac{1}{2}\|{\bf w}\|^2+\frac{C}{n}\sum_{i=1}^n\widetilde{H}({\bf x}_i).
\end{aligned}
\label{optpro_unconstrained}
\end{equation}
By disposing of a maximization oracle, \emph{i.e.} a solver for Eq.~\eqref{eq:maxoracle}, and a computed solution ${\bf y}^*$, 
subgradient methods can easily be applied to QP~\eqref{optpro_unconstrained}, being $\partial_{\bf w}\widetilde{H}({\bf x}_i) = -\delta\Psi_i({\bf y}^*)$.

\begin{algorithm}
\setstretch{1.35}
\caption{Block-Coordinate Frank-Wolfe Algorithm}
\label{BCFW}
\begin{algorithmic}[1]
    \STATE Let ${\bf w}^{(0)}, {\bf w}_i^{(0)} := {\bf 0} $ and $ l^{(0)}, l_i^{(0)} := 0$
	\FOR{$\text{it} := 0$ \TO $\text{maxIterations}$ }
		\STATE Pick $i$ at random in $\{1, \dots, n\}$
		\STATE Solve ${\bf y}* := \arg\max_{{\bf y}\in\mathcal{Y}}\Delta({\bf y}_i, {\bf y}) - {\bf w}^T\delta\Psi_i({\bf y})$
		\STATE Let ${\bf w}_s := \frac{C}{n}\delta\Psi_i({\bf y}^*)$ and $l_s := \frac{C}{n}\Delta({\bf y}_i, {\bf y}^*)$
		\STATE Let $\gamma := \frac{({\bf w}_i^{(\text{it})}-{\bf w}_s)^T{\bf w}^{(\text{it})}+\frac{C}{n}(l_s-l_i^{(\text{it})})}{|{\bf w}_i^{(\text{it})}-{\bf w}_s\|^2}$ and clip to $[0,1]$
		\STATE Update ${\bf w}_i^{(\text{it}+1)} := (1-\gamma){\bf w}_i^{(\text{it})} + \gamma {\bf w}_s$ \\$\quad$ and $l_i^{(\text{it}+1)}:= (1-\gamma)l_i^{(\text{it})}+\gamma l_s$
		\STATE Update ${\bf w}^{(\text{it}+1)}:= {\bf w}^{(\text{it})} + {\bf w}_i^{(\text{it}+1)} - {\bf w}_i^{(\text{it})}$\\ $\quad$ and $l^{(\text{it}+1)} := l^{(\text{it})} + l_i^{(\text{it}+1)}-l_i^{(\text{it})}$
	\ENDFOR
\end{algorithmic}
\end{algorithm}

To exploit the domain separability of the constraints and limit the number of oracle calls needed to converge to the optimal solution, we choose to adopt a Block-Coordinate version of the Frank-Wolfe algorithm (BCFW)~\cite{julien_block_coordinate_2012}, delineated in Alg.~\eqref{BCFW}.
The algorithm works by minimizing the objective function of Eq.~\eqref{optpro_unconstrained} but restricted to a single random example at each iteration. By calling the max oracle upon the selected training sample (line 4) we obtain a new sub-optimal parameter set ${\bf w}_s$ by simple derivation (line 5).
The best update is then found through a closed-form line search (line~6), greatly reducing convergence time compared to other subgradient methods.\\

\noindent In order to solve QP~\eqref{optpro_unconstrained} effectively, it is important to choose an appropriate loss function as the learning ability of Structural SVM highly depends on it. In Sec.~\ref{sec:loss_score} we introduce and discuss different potential loss functions and their respective descriptive ability. Given the loss function, in Sec.~\ref{sec:oracle} an efficient method to compute the maximization oracle (line 4 of Alg.~\ref{BCFW}) is described.

\subsection{Loss Function and Scoring Procedure}
\label{sec:loss_score}


One common choice of loss function for clustering is the \emph{pairwise loss} $\Delta_{PW}({\bf y}_i, {\bf y})$, which is a generalization of the Rand coefficient~\cite{rand_objective_1971}, and is defined as the ratio between the number of pairs on which ${\bf y}_i$ and ${\bf y}$ disagree on their cluster membership and the number of all possible pairs of elements in the set.
Due to the quadratic number of connections that exist among crowd members, this measure tends to be imprecise when dealing with large crowds: as the crowdness increases, the number of positive links connecting group members becomes negligible with respect to the total number of links. As a consequence, erroneous solutions won't be strongly penalized.\\ 
The \emph{MITRE~loss}~\cite{vilain_model-theoretic_1995}, $\Delta_{M}({\bf y}_i, {\bf y})$, founded on the understanding that connected components are sufficient to describe groups, partially mitigates this problem by representing groups as spanning trees, instead of complete graphs, inducing a linear amount of both positive and negative links among members (and not quadratic as in the pairwise case).
For any crowd partitioning, a spanning forest is an equivalence class as many trees that describe the same group configuration may exist.
The final score is obtained by accounting for the number of links that needs to be removed or added to recover a spanning forest of the correct solution.
Nonetheless, problems arise when working on relations and not directly on members, as singletons have no connections at all but should still be considered positively when correctly classified.

\begin{figure}[t!]
	\centering
        \subfloat[${\bf y}_i$ PAIRWISE links]{
                \includegraphics[width=0.45\columnwidth]{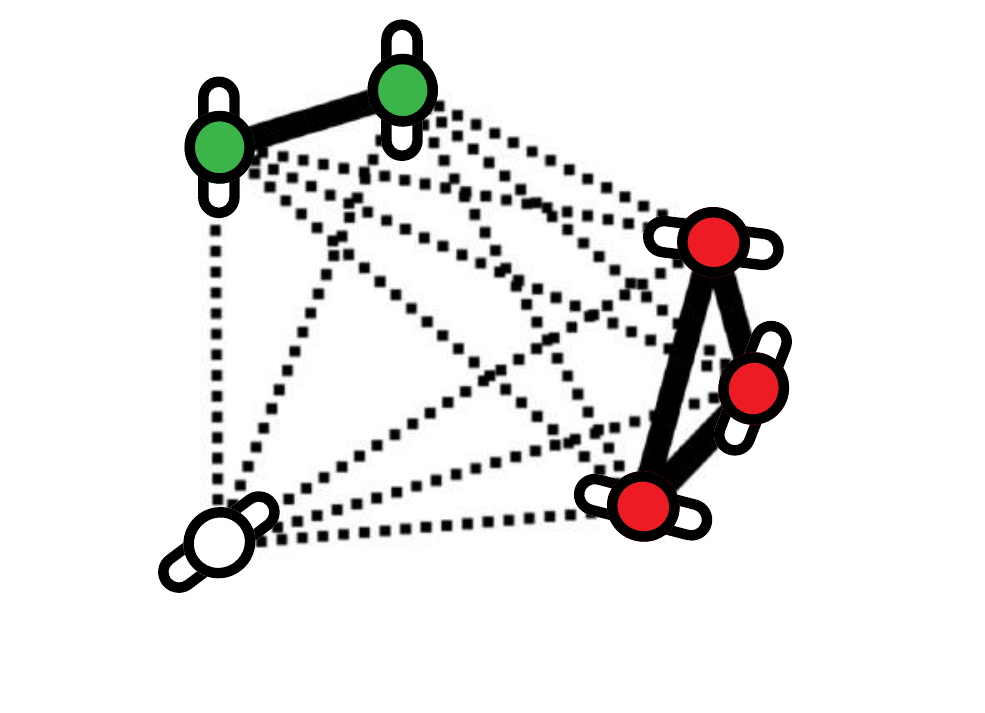}
        }
        \subfloat[${\bf y}, \Delta_{PW}({\bf y}_i, {\bf y})=0.27$]{
                \includegraphics[width=0.45\columnwidth]{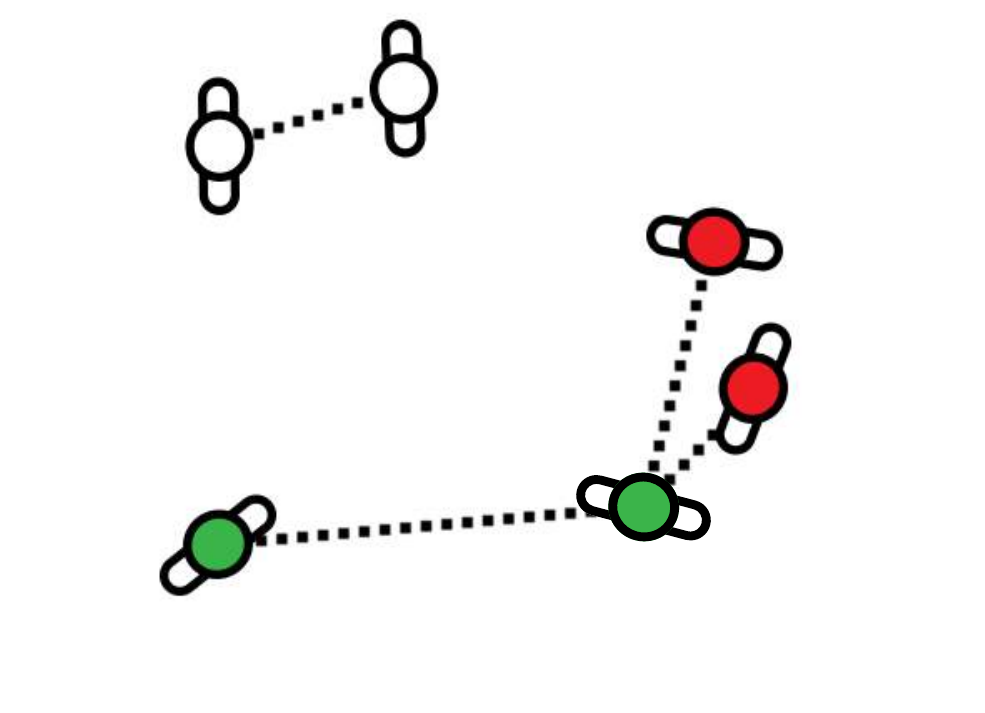}
        }\\
        \subfloat[${\bf y}_i$ MITRE links]{
                \includegraphics[width=0.45\columnwidth]{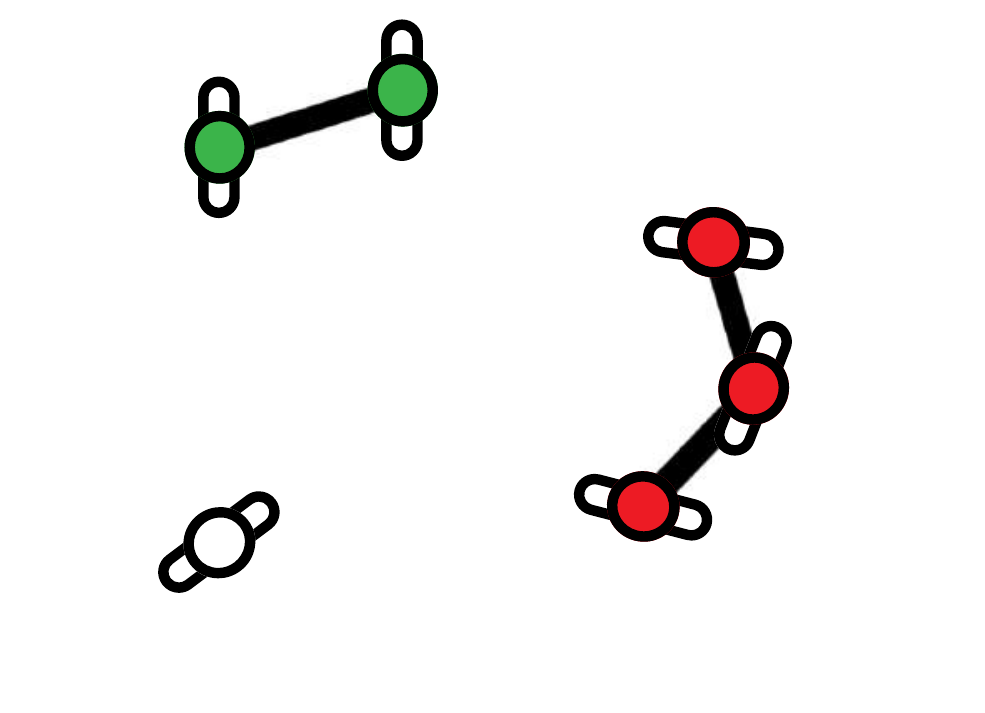}
        }
        \subfloat[${\bf y}, \Delta_{M}({\bf y}_i, {\bf y})=0.6$]{
                \includegraphics[width=0.45\columnwidth]{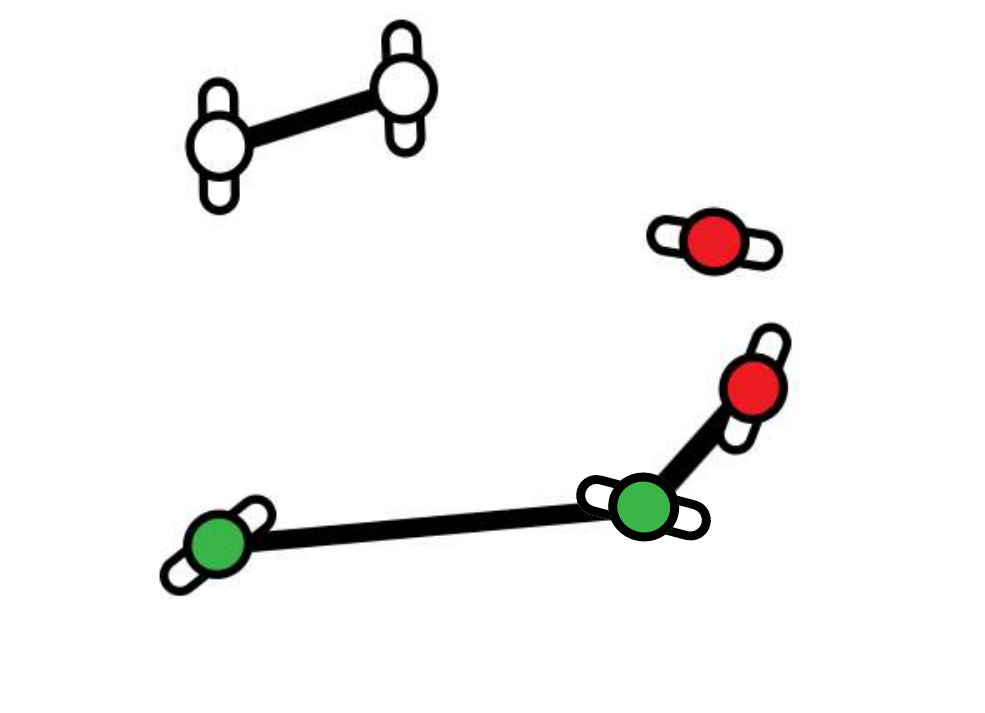}
        }\\
        \subfloat[${\bf y}_i$ $G$-MITRE links]{
                \includegraphics[width=0.45\columnwidth]{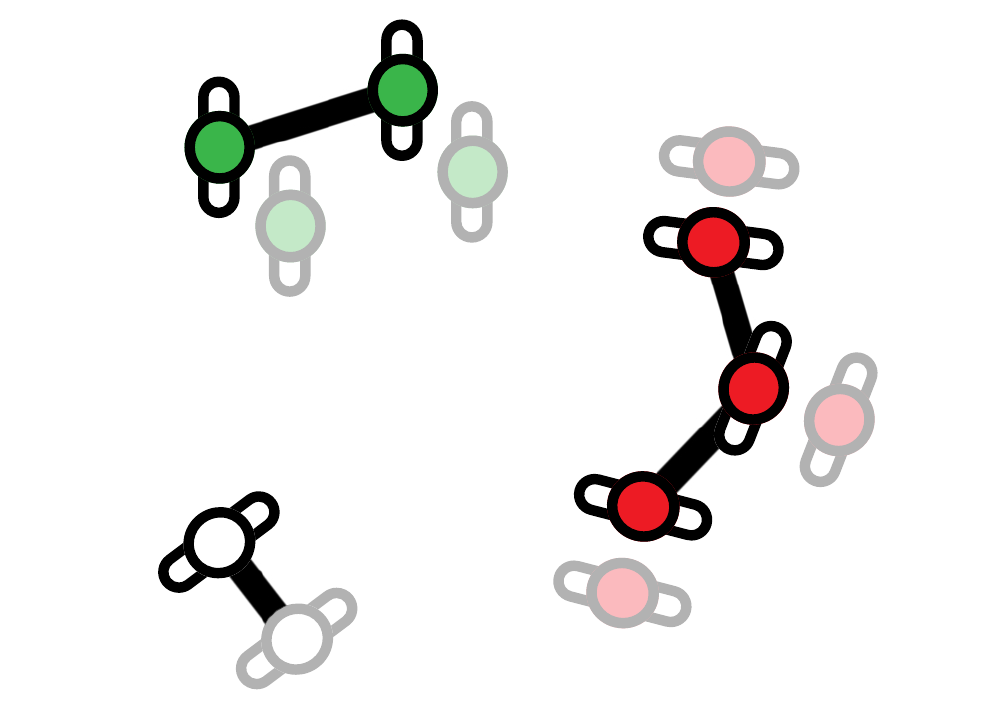}
        }
        \subfloat[${\bf y}, \Delta_{GM}({\bf y}_i, {\bf y})=0.75$]{
                \includegraphics[width=0.45\columnwidth]{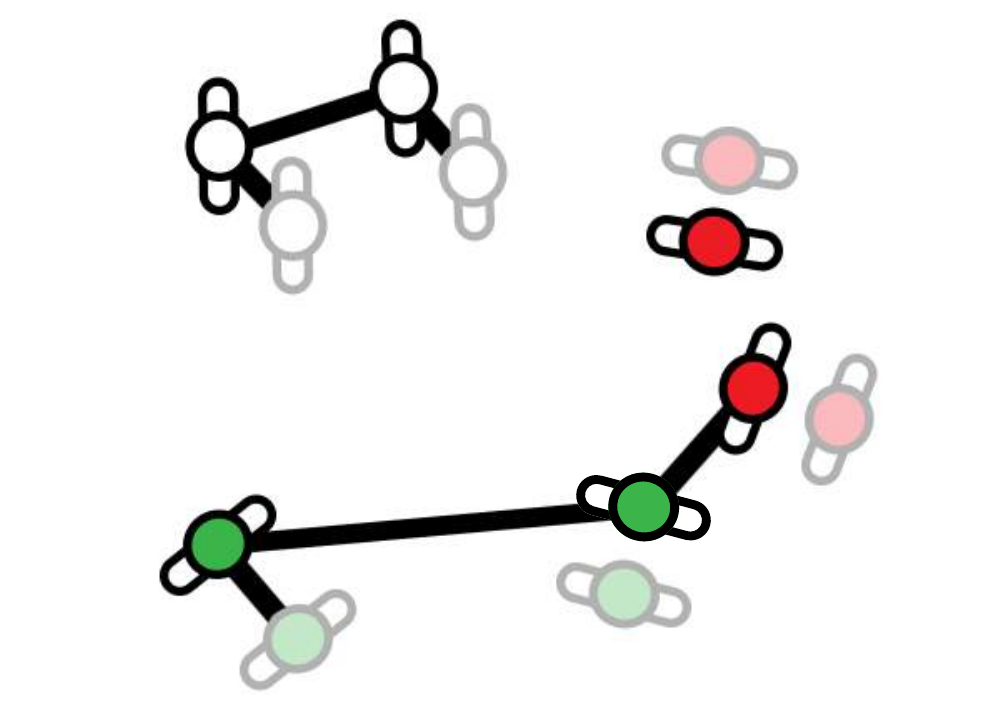}
        }
        \caption{Differences in the way losses account for errors. Singletons are white. Figures (a, c, e) depict solution ${\bf y_i}$ and the links considered by the respective losses, while (b, d, f) color pedestrians according to solution ${\bf y}$ and show the links on which the two solutions ${\bf y}_i$ and ${\bf y}$ disagree.}
        \label{fig:losses}
\end{figure}

For this motivation, we propose a loss function, \emph{GROUP-MITRE loss} ($G$-MITRE) $\Delta_{GM}({\bf y}_i, {\bf y})$, that overcomes this limitation by adding, for each pedestrian described by the trajectory $T_i$, a fake counterpart $\alpha_{T_i}$ to which only singletons are connected.
Through this shrewdness we can now take into consideration singletons as well when computing the discrepancy between two solutions. The particular design choice to link to the fake counterparts only singleton members generates two discrepancies when committing errors involving singletons and is thus a further effort in generating more plausible hierarchical groups in the solution, as depicted in Fig.~\ref{fig:losses}.
More formally, consider two clustering solutions ${\bf y}_i$, ${\bf y}$ and a representative of their respective spanning forests $Q$ and $R$. The connected components of $Q$ and $R$ are identified respectively by the set of trees $Q_{1}, Q_{2}, \dots$ and $R_1,R_2,\dots$. Note that if the number of elements in $Q_j$ is $|Q_j|$, then only $c(Q_j)\stackrel{\text{\tiny def}}{=}|Q_j|-1$ links are needed in order to create a spanning tree. Let us define $\pi_{\scriptscriptstyle R}(Q_j)$ as the partition of a tree $Q_j$ with respect to the forest $R$, that is the set of subtrees obtained by considering only the membership relations in $Q_j$ also found in $R$. Besides, if $R$ partitions $Q_j$ in $|\pi_{\scriptscriptstyle R}(Q_j)|$ subtrees then $v(Q_j)\stackrel{\text{\tiny def}}{=}|\pi_{\scriptscriptstyle R}(Q_j)| - 1$ links are sufficient to restore the original tree. It follows that the recall error for $Q_j$ can be computed as the number of missing links divided by the minimum number of links needed to create that spanning tree. Accounting for all trees $Q_j$ the global recall measure of $Q$ is:
\begin{equation}
\label{eq:recall_mitre}
\begin{aligned}
\mathcal{R}_{Q} = 1 - \frac{\sum_{j} v(Q_j)}{\sum_{j} c(Q_j)} = \frac{\sum_{j} |Q_j|- |\pi_{\scriptscriptstyle R}(Q_j)|}{\sum_{j}|Q_j|-1}
\end{aligned}
\end{equation}
The precision of $Q$ (recall of $R$) can be computed by exchanging $Q$ and $R$. Given the definition of precision, recall and employing the standard $F$-score $F_1$, the loss is defined as
\begin{equation}
\Delta_{GM}=1-F_1.
\end{equation}
%

The complete algorithm for the computation of the $G$-MITRE loss is reported in Alg.~\ref{alg:G_MITRE}. We employ disjoint-set arrays due to the efficiency of checking whether two pedestrians belong to the same group. Recall that {\footnotesize UNION} and {\footnotesize FIND} are the standard functions defined over the disjoint-set arrays and denote the operations to merge two clusters and to find an element membership respectively. In the pseudo-code we use the notation ${\bf y}_i/{\bf y}$ to indicate that the algorithm first work on the solution ${\bf y}_i$ and then analogously on ${\bf y}$.


\begin{algorithm}[t!]
\setstretch{1.35}
\caption{$G$-MITRE loss $\Delta_{GM}({\bf y}_i, {\bf y})$ computation}
\label{alg:G_MITRE}
\begin{algorithmic}[1]
	\REQUIRE ${\bf y}_i$ and ${\bf y}$ as \emph{disjoint-set data structures}
    \STATE $\varphi(x)$ are the unique roots of connected components $x$
    \STATE $\Gamma(x)$ is the size of the connected component with root $x$
	\FORALL{$T \in {\bf y}_i/{\bf y}$} 
		\STATE ${\bf y}_i/{\bf y}= {\bf y}_i/{\bf y}\cup\alpha_{T}$
		\IF{$\Gamma(\text{\footnotesize FIND}({\bf y}_i/{\bf y}(T)) = 1$}
			\STATE $\text{\footnotesize UNION}({\bf y}_i/{\bf y}(T), {\bf y}_i/{\bf y}(\alpha_{T}))$
		\ENDIF
	\ENDFOR	
	\FORALL{$q \in \varphi({\bf y}_i/{\bf y})$}
		\STATE $v_{{\bf y}_i/{\bf y}}~{+=}~|\varphi(\bigcup_{\text{\scriptsize FIND}({\bf y}_i/{\bf y}(T)) = q}{\bf y}/{\bf y}_i(T))| - 1$
		\STATE $c_{{\bf y}_i/{\bf y}}~{+=}~\Gamma(q) - 1$
	\ENDFOR
	\STATE $\mathcal{R}_{{\bf y}_i/{\bf y}} = 1 - v_{{\bf y}_i/{\bf y}} / c_{{\bf y}_i/{\bf y}}$
	\STATE $\Delta({\bf y}_i, {\bf y}) = 1 - 2\mathcal{R}_{{\bf y}_i}\mathcal{R}_{{\bf y}} / (\mathcal{R}_{{\bf y}_i}+\mathcal{R}_{{\bf y}})$
\end{algorithmic}
\end{algorithm}

\subsection{Approximate Oracle}
\label{sec:oracle}
Despite the simplicity of the algorithm, the intrinsic complexity of the optimization is hidden in the search for the most violating solution ${\bf y}^*$ for the $i$-th example (line 4 of Alg.~\eqref{BCFW}): finding the most violated constraint requires to solve the loss augmented decoding subproblem. Note that the original prediction problem of Eq.~\eqref{eq:pred_fun} is NP-hard and the insertion of a non-linear loss in the computation of the maximum is not likely to help.
Nevertheless, thanks to its iterative nature, the inference scheme introduced in Sec.~\ref{sec:solution} can be adapted to approximate the oracle as well. Starting from the trivial solution having each pedestrian of the $i$-th example in its own cluster, the algorithm repeatedly merges the two clusters which reflect in the highest increment in the structured hinge-loss $\tilde{H}({\bf x}_i)$ of Eq.~\eqref{eq:maxoracle}, until a local maxima is found. 


Of course by following a greedy procedure, there is no guarantee to select the most violated constraint. Interestingly enough, Lacoste-Julien~\emph{et al.}~\cite{julien_block_coordinate_2012} show that all convergence results known for exact maximizer of the loss augmented problem also hold for approximate maximizers by allowing the algorithm to iterate longer toward convergence.
For further details, please refer to their original work.
\begin{table*}[t!]
\centering
\caption{Comparative results on publicly available dataset using the $G$-MITRE loss of Sec.~\ref{sec:loss_score} and the positive pairwise loss $\Delta_{PW}^+$ of \cite{zanotto12}.}
\begin{tabular}{|lc|c|c||c|c||c|c|c|c|c|c|c|c|}
\hline 
 & &\multicolumn{2}{|c||}{our method} & \multicolumn{2}{|c||}{baseline} & \multicolumn{2}{|c|}{\cite{ge_vision-based_2012}} & \multicolumn{2}{|c|}{\cite{yamaguchi_who_2011}} & \multicolumn{2}{|c|}{\cite{zanotto12}} & \multicolumn{2}{|c|}{\cite{shao14}} \\
 & &  $\mathcal{P}$ & $\mathcal{R}$ & $\mathcal{P}$ & $\mathcal{R}$ & $\mathcal{P}$ & $\mathcal{R}$ & $\mathcal{P}$ & $\mathcal{R}$ & $\mathcal{P}$ & $\mathcal{R}$ & $\mathcal{P}$ & $\mathcal{R}$ \\ 
\hline
\rowcolor{LightGray}
\emph{BIWI}  & {$\Delta_{GM}$} & \textls{\bf 97.3 {\footnotesize $\pm$ 0.7}} & \textls{\bf 97.7 {\footnotesize $\pm$ 1.5}} & \textls{71.0 {\footnotesize $\pm$ 8.1}} & \textls{69.6 {\footnotesize $\pm$ 7.4}} & \textls{89.2} & \textls{90.9} & \textls{84.0} & \textls{51.2} & \textls{} & \textls{} & \textls{67.3} & \textls{64.1}\\
\verb+hotel+ & {$\Delta_{PW}^+$} & \textls{\bf 89.1 {\footnotesize $\pm$ 1.2}} & \textls{\bf 91.9 {\footnotesize $\pm$ 1.5}} & \textls{47.6 {\footnotesize $\pm$ 9.2}} & \textls{88.6 {\footnotesize $\pm$ 8.6}} & \textls{88.9} & \textls{89.3} & \textls{83.7} & \textls{93.9} & \textls{81.0} & \textls{91.0} & \textls{51.5} & \textls{90.4}\\
\hline
\rowcolor{LightGray}
\emph{BIWI}  & {$\Delta_{GM}$} & \textls{\bf 91.8 {\footnotesize $\pm$ 1.2}} & \textls{\bf 94.2 {\footnotesize $\pm$ 0.9}} & \textls{72.4 {\footnotesize $\pm$ 4.4}} & \textls{65.2 {\footnotesize $\pm$ 3.4}} & \textls{87.0} & \textls{84.2} & \textls{60.6} & \textls{76.4} & \textls{} & \textls{} & \textls{69.3} & \textls{68.2} \\
\verb+eth+ & {$\Delta_{PW}^+$} & \textls{\bf 91.1 {\footnotesize $\pm$ 0.4}} & \textls{\bf 83.4 {\footnotesize $\pm$ 0.6}} & \textls{39.1 {\footnotesize $\pm$ 8.4}} & \textls{91.2 {\footnotesize $\pm$ 1.7}} & \textls{80.7} & \textls{80.7} & \textls{72.9} & \textls{78.0} & \textls{79.0} & \textls{82.0} & \textls{44.5} & \textls{87.0} \\
\hline
\rowcolor{LightGray}
\emph{CBE} & {$\Delta_{GM}$} & \textls{\bf 81.7 {\footnotesize $\pm$ 0.2}} & \textls{\bf 82.5 {\footnotesize $\pm$ 0.2}} & \textls{59.9 {\footnotesize $\pm$ 2.9}} & \textls{53.5 {\footnotesize $\pm$ 6.8}} & \textls{77.2} & \textls{73.6} & \textls{56.7} & \textls{76.0} & \textls{} & \textls{} & \textls{40.4} & \textls{48.6} \\
\verb+student003+  & {$\Delta_{PW}^+$} & \textls{\bf  82.3 {\footnotesize $\pm$ 0.3} } & \textls{\bf  74.1 {\footnotesize $\pm$ 0.2}} & \textls{ 24.0 {\footnotesize $\pm$ 9.7}} & \textls{ 49.3 {\footnotesize $\pm$ 12.9}} & \textls{72.2} & \textls{65.1} & \textls{63.9} & \textls{72.6} & \textls{70.0} & \textls{74.0} & \textls{10.6} & \textls{76.0} \\
\hline
\end{tabular}
\label{tab:comparison}
\vspace{-0.5cm}
\end{table*}

\section{Experimental Results}

\label{sec:exp}
We designed several experiments to evaluate the algorithm behavior on well-assessed benchmarks and its connections to the nature of the problem.
All the experiments were carried out on ground truth trajectory data, except for Sec.~\ref{exp:real} where the method is evaluated on tracklets extracted by a modern detector/tracker system.
We also propose new video sequences to stress the algorithm over a variety of challenges in real world scenarios. Since the method works on ground plane (metric) data, we also provide homography information for all the employed sequences.

\subsubsection*{Datasets}
We selected two publicly available datasets, namely the \emph{BIWI Walking Pedestrians} dataset~\cite{pellegrini_youll_2009} and the \emph{Crowds-By-Examples (CBE)} dataset~\cite{lerner_crowds_2007}. The former dataset records two low crowded scenes, outside a university and at a bus stop (\verb+eth+ and \verb+hotel+ in Tab.~\ref{tab:dataset}). The \emph{CBE} dataset records a medium density crowd outside another university (\verb+student003+, briefly \verb+stu003+) providing some challenges: the density of the pedestrians is significantly high and the presence of multiple entry and exit points. While \emph{BIWI} and \emph{CBE} are standard datasets in crowd analysis, we also use the more recent \emph{Vittorio Emanuele II Gallery (VEIIG)} dataset~\cite{BanGorVizPRLGallery}, from which we extracted a five minutes subsequence, \verb+gal1+, particularly interesting due to the fast and continuous change in crowd density.
We also propose a new dataset to cope with the increasing variety of application in dense-crowd management, \emph{MPT-$20$x$100$}, composed of 20 sequences of 100 frames where we manually annotated trajectories and social groups. The dataset comprises different videos~\cite{bolei_2014} all characterized by a high number of pedestrians with an heterogeneous set of scene conditions, ranging from density, scale, viewpoint and type of interactions, like walking in a mall, crossing the street or participating at public events.\\
In Tab.~\ref{tab:dataset} we report some measures useful to characterize the spatial complexity of the datasets:
\begin{itemize}
	\item $d_\text{in}$ is the \emph{group compactness}, computed as the mean distance between members of the same groups;
	\item $d_\text{out}$ is the \emph{group isolation} or the mean distance between each member and its closest unrelated pedestrian;
	\item the ratio $d_\text{i/o}\stackrel{\text{\tiny def}}{=}d_\text{in}/d_\text{out}$ measures \emph{crowd collectiveness}: small values mean compact groups in a sparse crowd.
\end{itemize}

\begin{table}[t!]
\center
\caption{Datasets: number of pedestrians (\#p), groups (\#g) and density metrics.}
\begin{tabular}{|l|c|c|c|c|c|c|}
\hline 
 & \#p & \#g & $d_\text{in}~(m)$ & $d_\text{out}~(m) $ & $d_\text{i/o}$ \\
\hline
\verb+student003+ & 406 & 108 & 0.41 & 0.70 & 0.59\\
\hline
\verb+eth+ & 117 & 18 & 0.99 & 2.79 & 0.35\\ 
\hline 
\verb+hotel+ & 107 & 11 & 0.75 & 2.00 & 0.38\\ 
\hline
\verb+gal1+ & 630 & 207 & 0.77 & 1.66 & 0.46\\
\hline
\verb+MPT-20x100+ & 82 & 10 & 0.63 & 1.45 & 0.48\\
\hline
\end{tabular} 
\label{tab:dataset}
\end{table}

\subsubsection*{Evaluation Scheme}
There is no consensus on which metrics should be used to evaluate groups correctness: we propose to use the $G$-MITRE precision $\mathcal{P}$ and recall $\mathcal{R}$ since it accounts for the correct classification of singletons as well. This is an important gain as in crowded scenes the number of people walking alone is rarely negligible.
Each measure is reported in terms of mean and standard deviation over 5 runs to account for the stochastic nature of the training of our algorithm. Where not differently specified, we used a 100s for training and a 10s sliding window with no overlap for features computation. The regularization parameter $C$ of QP.~\eqref{optpro} is fixed to 10.\\

\noindent For the heat-map based feature of Sec.~\ref{sec:heatmaps}, we run a grid search on the parameters. For all the experiments, the length of the cells edge is fixed to 30cm, $k_s=10^{-5}$ and $k_r=0.5$.

\subsection{Baseline and Benchmark Comparisons}
We compare our method with three recent state of the art group detection algorithms, namely~\cite{ge_vision-based_2012,yamaguchi_who_2011,zanotto12,shao14}, selected on the basis of their reported performances on public datasets and availability of code.
In addition, we devised a simple baseline version of our solution that performs the group partitioning with no use of the learning framework. The weights are randomly chosen to be the same for all the features, so that the randomness resides in the similarity/dissimilarity ratio.

\begin{table}[t!]
\centering
\caption{Evaluation of our proposal when trained with different loss functions.}
\begin{tabular}{|l|c|c|c|c|}
\hline 
 & \multicolumn{2}{|c|}{Pairwise $\Delta_{PW}$} & \multicolumn{2}{|c|}{MITRE $\Delta_{M}$} \\
 & $\mathcal{P}$ & $\mathcal{R}$ & $\mathcal{P}$ & $\mathcal{R}$ \\ 
\hline
\verb+hotel+ & \textls{90.1 $\pm$ 2.0} & \textls{84.1 $\pm$ 3.2} & \textls{89.2 $\pm$ 3.0}& \textls{93.2 $\pm$ 1.9} \\
\hline
\verb+eth+ & \textls{88.7 $\pm$ 1.8} & \textls{87.3 $\pm$ 2.6} & \textls{91.9 $\pm$ 0.8}& \textls{92.9 $\pm$ 1.0}\\
\hline 
\verb+stu003+ & \textls{68.9 $\pm$ 1.4} & \textls{69.9 $\pm$ 1.5} & \textls{80.1 $\pm$ 2.4}& \textls{80.9 $\pm$ 2.3}\\
\hline
\end{tabular}
\label{tab:loss}
\end{table}

\subsubsection{Quantitative Results}
Quantitative results are given in Tab. \ref{tab:comparison}. To highlight our algorithm superiority, results are presented both in terms of $G$-MITRE and a pairwise loss accounting only for positive (intra-group) relations  but neglecting singletons, $\Delta_{PW}^+$~\cite{zanotto12}. The latter loss is not directly optimized by our algorithm, still our method outperforms the competitors in all the tested sequences. This can be explained through the ability of our algorithm to adapt the concept of groups to always different scenario by varying the feature importance and the use of sociologically inspired similarity functions. The slightly lower performances on the \verb+stu003+ sequence are due to the high complexity of the scene: the high value of the $d_\text{i/o}$ ratio in Tab. \ref{tab:dataset} suggests the presence of loose groups in a dense crowd and, as such, challenging to be detected.

\subsubsection{Evaluation of Different Loss Functions}
As structured learning relies upon a definition of \emph{what's wrong} to learn how to classify well, the choice of the loss function can greatly affect the final performances. By fixing the $G$-MITRE measure as a proper scoring scheme, we quantitatively test the influence of the choice of the loss on the \verb+eth+, \verb+hotel+ and \verb+stu003+ datasets (Tab.~\ref{tab:loss}).
As it could be expected by its definition, the improvement due to the use of the $G$-MITRE loss (reported in Tab.~\ref{tab:comparison}) is greater in the \verb+eth+ and \verb+hotel+ sequences where the ratio between the number of singletons and the people walking in groups is higher and as such learning to classify them as well becomes crucial. More interestingly, we observe how the pairwise loss obtains outstanding performances when the number of pedestrians is limited, but becomes ineffective when it starts to grow, as in \verb+stu003+.

%

\subsection{Features Weight Learning on \emph{MPT-${\bf 20}$x${\bf 100}$}}
\emph{CBE} and \emph{BIWI} datasets expose some interesting challenges of the problem but, with the only exception of \verb+stu003+ sequence, they have a limited number of pedestrians in scene and a low crowd density. Moreover, the scenarios are similar and the variety of interactions underlying the group formation is limited. The proposed \emph{MPT-$20$x$100$} datasets, on the other hand, presents different degrees of complexity.


\begin{figure}[t!]
   \centering
    	\includegraphics[width=\columnwidth]{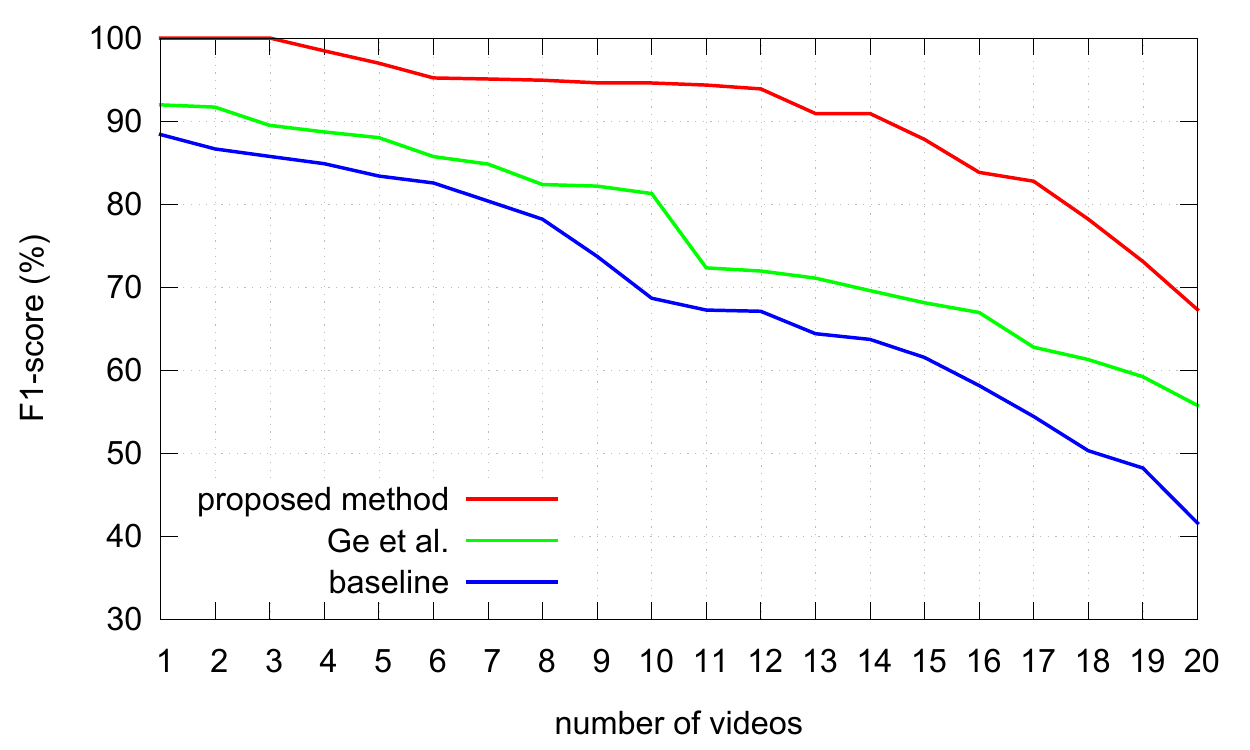}
   \caption{Comparison against baseline and \cite{ge_vision-based_2012} on \emph{MPT-$20$x$100$}.}
   \label{fig:perf_comp_b}
\end{figure}
First, we evaluate the general performance of the algorithm and compare with both our baseline and the proposal in \cite{ge_vision-based_2012} where, for the latter method, we manually tuned the thresholds to achieve best results.
These methods are clustering based, partially consistent with the social group axioms but no learning is employed.
Results are shown in Fig.~\ref{fig:perf_comp_b} as a \emph{survival curve} plot which reveals on how many sequences the algorithms where at least able to reach the specific lower-bound performance and per-video scores are in Fig.~\ref{fig:perf_comp_a}. Interestingly, the difference between our method and \cite{ge_vision-based_2012} increases here with respect to the previous datasets on an average of 10\%, suggesting that sequences can be really different in the concept of groups they embed and thus learning is mandatory to adapt to this new representations of social groups and keep performances stable.

\subsubsection{The Need for Learning from Examples}
\begin{figure}[t!]
   \centering
    	\includegraphics[width=\columnwidth]{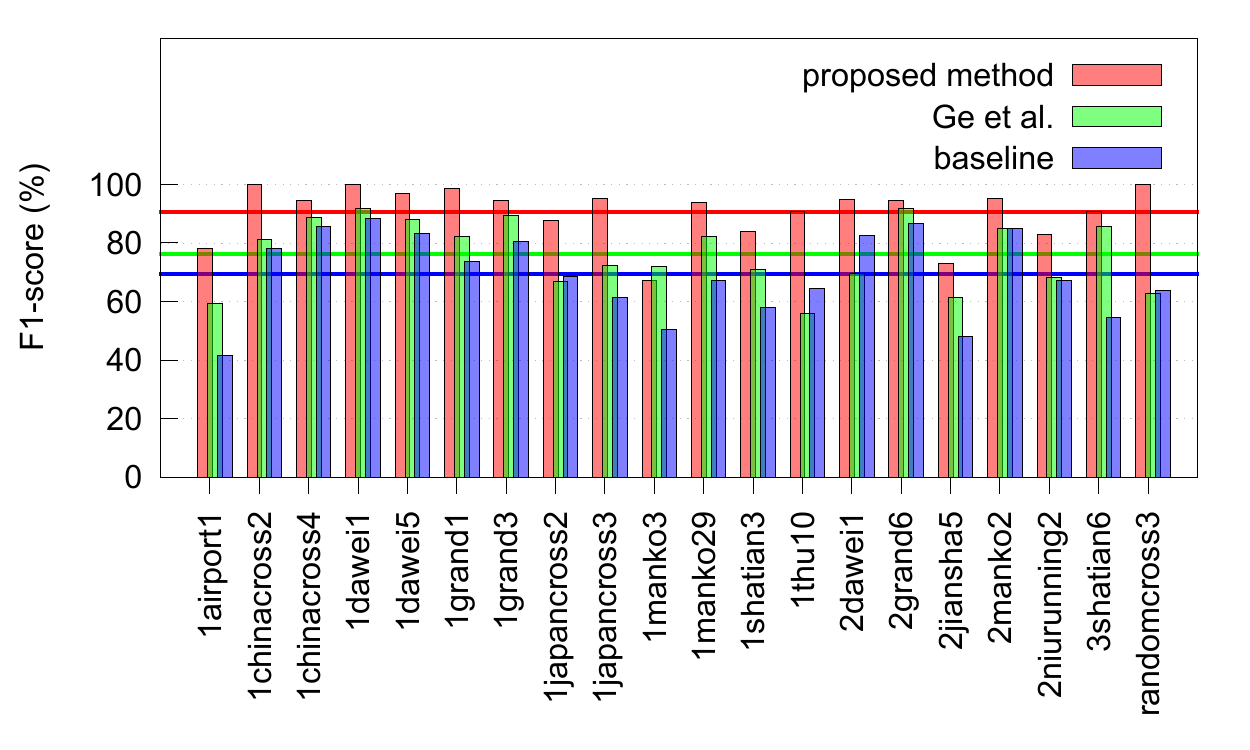}
   \caption{Results on \emph{MPT-$20$x$100$} highlight the complexity of each scene.}
   \label{fig:perf_comp_a}
\end{figure}
The confusion-like matrix, depicted in Fig. \ref{fig:matrix}, presents the F-1 scores obtained by training the algorithm on one sequence of \emph{MPT-$20$x$100$} (row labels) and testing it on all the other sequences (column labels).
By reading the matrix, and averaging each row over all the columns, it is possible to grasp how good a particular sequence was for training. At the same time, by observing the average of the columns over all the rows, we can get intuition about how much each sequence was effectively predicted by all the others.\\
We are interested in understanding whether a specific notion of group is shared across sequences and how it is influenced by both scene elements (\emph{e.g.} crowd density) and unobserved aspects (\emph{e.g.} intentions and social hierarchies).
%
\begin{figure}[t!]
  \centering
    \includegraphics[width=\columnwidth]{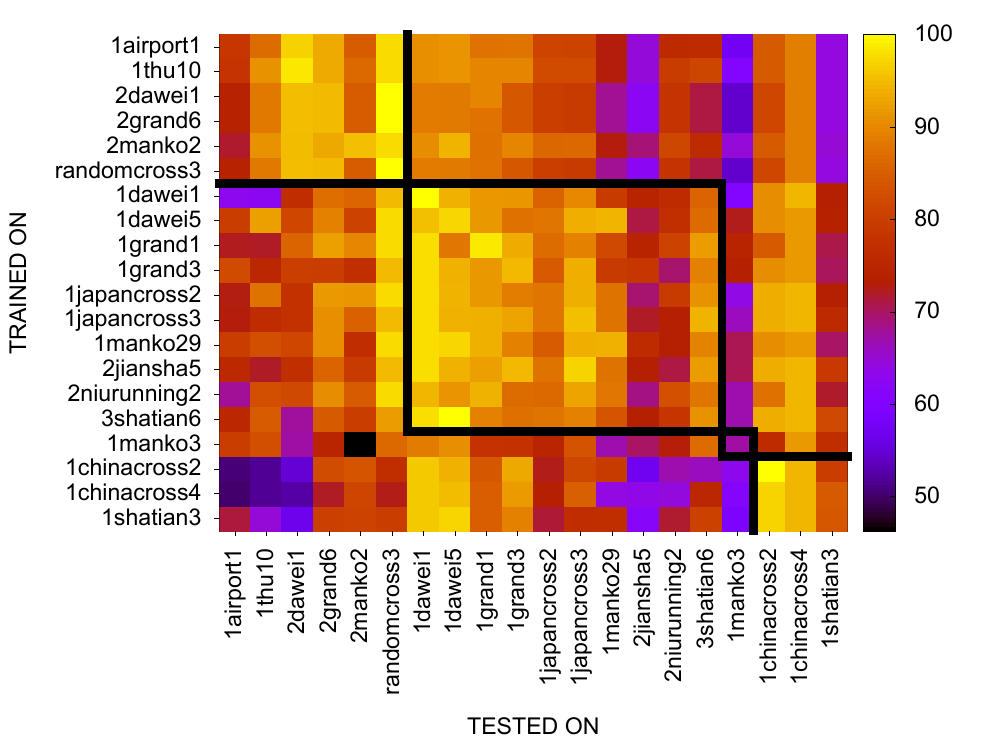}
  \caption{F-1 scores obtained by all combinations of train/test pair sequences in \emph{MPT-$20$x$100$}. Results were clustered (diagonal blocks C1-C4 from left to right) to highlight similar notion of group among sequences.}
  \label{fig:matrix}
\end{figure}

With the purpose of capturing these invariants, we search the connected component of the matrix using the $\text{F-1 score}$ as the affinity value among elements. Clustering is performed through an asymmetric version of spectral clustering~\cite{naumann_combinatorial_2012} based on the Random Walk Laplacian defined as
\begin{equation}
L = AD^{-1},
\end{equation}
where $A$ is the affinity matrix defined as in Fig.~\ref{fig:matrix} and $D$ is the usual degree matrix. Following the eigen-gap heuristic we found $4$ distinct clusters in the \emph{MPT-$20$x$100$} dataset, highlighted with black lines in Fig. \ref{fig:matrix}; for every cluster we computed the $d_\text{in}$, $d_\text{out}$ and $d_\text{i/o}$ spatial measures, displayed in Tab. \ref{tab:spatial}, to verify if clusters with a similar notion of group also share a common configuration of distances among pedestrians and possibly if the performance are connected to crowd density.

\begin{table}[t!]
\centering
\caption{Spatial depiction, training efficacy and groups predictability of the clusters of sequences of Fig.~\ref{fig:matrix}.}
\begin{tabular}{|l|c|c|c|c|c|}
\hline 
cluster & $d_\text{in}$ (m)& $d_\text{out}$ (m)& $d_\text{i/o}$ & $\text{F}_1$ train & $\text{F}_1$ test\\
\hline
\hline
C1 & 0.58 & 1.03 & 0.54 & 0.82 & 0.82 \\
\hline
C2 & 0.59 & 1.28 & 0.47 & 0.85 & 0.84 \\
\hline
C3 & 0.59 & 0.99 & 0.59 & 0.77 & 0.64 \\
\hline
C4 & 0.89 & 3.00 & 0.34 & 0.75 & 0.84 \\
\hline
\end{tabular}
\label{tab:spatial}
\end{table}


Tab. \ref{tab:spatial} also reports a measure of \emph{training efficacy} (F$_1$ train), computed as the mean accuracy obtained on the whole dataset when only sequences in that specifi cluster were used for training and, analogously, a group \emph{predictability score} (F$_1$ test) or the mean accuracy obtained on the sequences of that cluster when all the sequences were used for training. They indicate how much a cluster is useful during  training
and easy it is to predict groups inside its sequences.

A first observation that can be made is about the cluster C4, which presents the highest F$_1$ test and the lowest F$_1$ train. We found it was easy to predict groups in these videos but they were poorly informative as training examples, a result justified by its small $d_\text{i/o}$.
Nonetheless, clusters 1 and 3 exhibits very similar $d_\text{i/o}$ ratio but perform very differently in terms both of training efficacy and testing score, suggesting a trivial heuristic based on spatial information only is insufficient to visually discern groups.
Implicit aspects like motion constraints or cultural and social context also affect the group process formation, defending our hypothesis that learning is needed to adapt the concept of group to the current data.

\subsubsection{Do we Capture the Essence of Being a Group?}
As previously stated, \emph{MPT-$20$x$100$} comprises very different scenarios and situations and can provide important insights on which are the most important elements that reveal groups. To this end, recall the definition of feature vector ${\bf w} = [\boldsymbol\alpha, \boldsymbol\beta] = [w_1,w_2,\dots,w_8]$ from Eq.~\eqref{eq:cc_affinity_parametrization} of Sec.~\ref{sec:solution} is such that the affinity between two trajectories $T_a$ and $T_b$ can be written as:
%
%
\begin{equation}
\label{eq:w_decomposed}
\begin{aligned}
&W^{ab}_{\bf d} &= {\boldsymbol\alpha}^T ({\bf 1} - {\bf d}(a, b)) - {\boldsymbol\beta}^T&{\bf d}(a, b)\\
&&= w_1 + w_2 + w_3 + w_4 - &[(w_5+w_1)d_{ph} + \dots\\
&& &~(w_6+w_2)d_{sh} + \dots\\
&& &~(w_7+w_3)d_{ca} + \dots\\
&&\underbrace{\hspace{3cm}\vphantom{]}}_\text{constant term}~&\underbrace{(w_8+w_4)d_{he}}_\text{$(a,b)$-dependent term}]
\end{aligned}
\end{equation}
%
The contribution of each feature to the score, transformed from a distance to an affinity measure by the constant term of Eq.~\eqref{eq:w_decomposed}, is encoded in the absolute value of the coefficient of the features themselves.


As shown in Fig.~\ref{fig:weight_imp_a}, the proxemic inspired feature $d_{ph}$ dominates all the others while the importance of the remaining features vary greatly from sequence to sequence.
The two sequences \verb+1manko3+ (Fig.~\ref{fig:more_results}) and \verb+1dawei1+ (Fig.~\ref{fig:crowd}), for example, present very similar contribution from $d_\text{hm}$ and $d_\text{sh}$, while the importance assigned to $d_\text{ph}$ in \verb+1dawei1+ is shifted to $d_\text{ca}$ in \verb+1manko3+.
The former sequence present a particularly sparse crowd, making distance among elements a strong peculiarity of groups, but when the space among pedestrian is reduced both intra and inter-groups distances (and consequently $d_\text{ph}$) become less significant. Conversely, the causality feature $d_{ca}$ becomes more important when the density increases as pedestrians tend to follow each others to avoid getting separated from the rest of the group.
\begin{figure}[t!]
  \centering
    \includegraphics[width=\columnwidth]{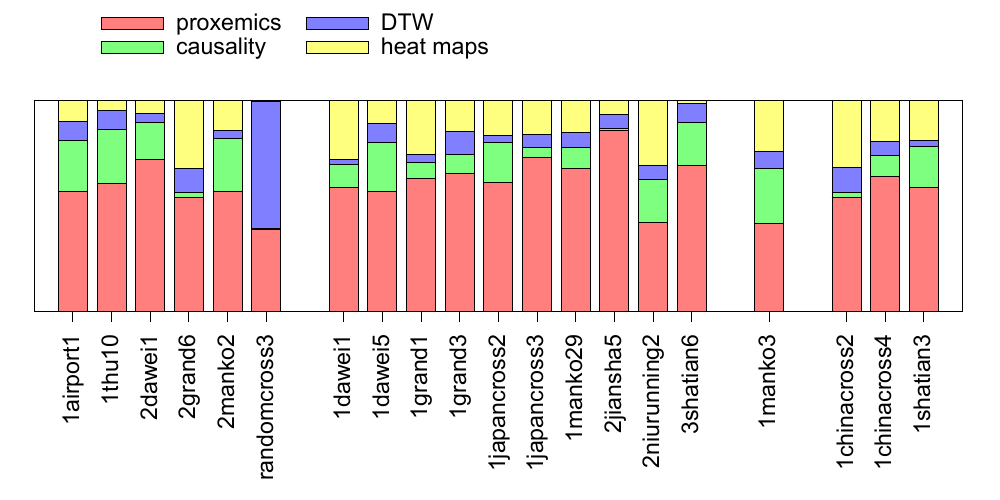}
  \caption{Features normalized coefficients of Eq.~\eqref{eq:w_decomposed}.}
  \label{fig:weight_imp_a}
\end{figure}
Heat maps importance gain emphasis from comparing \verb+1manko3+ and \verb+3shatian6+ (Fig.~\ref{fig:more_results}), as they are very helpful in decoupling trajectories that stand very close in space but for a very limited amount of time. In particular, in \verb+1manko3+, people crossing from opposite sides of the road tend to be very close when meeting in the middle, even if they are not in the same group. 


\begin{table*}[t!]
\centering
\caption{Performance of detector \cite{DollarPAMI14pyramids}, tracker \cite{Milan:2014:CEM} and group detection algorithms (in terms of G-MITRE) in a fully automatic pipeline.}
\begin{tabular}{|l|c|c||c|c|c|c||c|c|c|c|c|c|c|c|}
	\hline
	$ $ & \multicolumn{2}{|c||}{Detector} & \multicolumn{4}{|c||}{Tracker} & \multicolumn{2}{|c|}{our proposal} & \multicolumn{2}{|c|}{\cite{ge_vision-based_2012}} & \multicolumn{2}{|c|}{\cite{yamaguchi_who_2011}} & \multicolumn{2}{|c|}{\cite{shao14}}\\
	$ $ & $\mathcal{P}$ & $\mathcal{R}$ & MOT(A/P) & MT & IDS & FRG & $\mathcal{P}$ & $\mathcal{R}$ & $\mathcal{P}$ & $\mathcal{R}$ & $\mathcal{P}$ & $\mathcal{R}$ & $\mathcal{P}$ & $\mathcal{R}$\\
	\hline
	\verb+hotel+ & 43.1 & 52.4 & 66.9 / 0.88 & 18.8 & 120 & 34 & 77.9 & 76.9 & 75.7 & 78.0 & 46.3 & 38.6 & 60.2 & 57.5\\
	\verb+eth+ & 68.2 & 53.7 & 92.3 / 0.08 & 75.0 & 0 & 68 & 81.1 & 79.7 & 78.4 & 79.3 & 58.3 & 70.6 & 57.3 & 61.2\\
	\verb+student+ & 56.7 & 36.8 & 43.3 / 1.22 & 06.0 & 342 & 876 & 75.0 & 71.3 & 63.2 & 56.4 & 40.2 & 52.4 & 35.1 & 40.2\\
	\hline
\end{tabular}
\label{tab:track}
\end{table*}

\subsection{Evaluating the Influence of Density Changes}
%
%
In this test setting we evaluate if the feature weights learned by the Structural SVM of Sec. \ref{sec:learning} are sufficiently general to deal with crowds at different densities and, at the same time, understand whether an online version of Alg.~\ref{BCFW} would bring any accuracy improvement. To this end we introduce a new video sequence, \verb+gal1+ from \emph{GVEII}, containing an average number of $70$ pedestrians simultaneously present in the scene.
The distribution of pedestrians is not uniform though, and increases over time, as well as for their density, represented by the $d_\text{i/o}$ ratio (Fig. \ref{fig:GVEIIpeople}).
\begin{figure}[t!]
  \centering
  \includegraphics[width=\columnwidth]{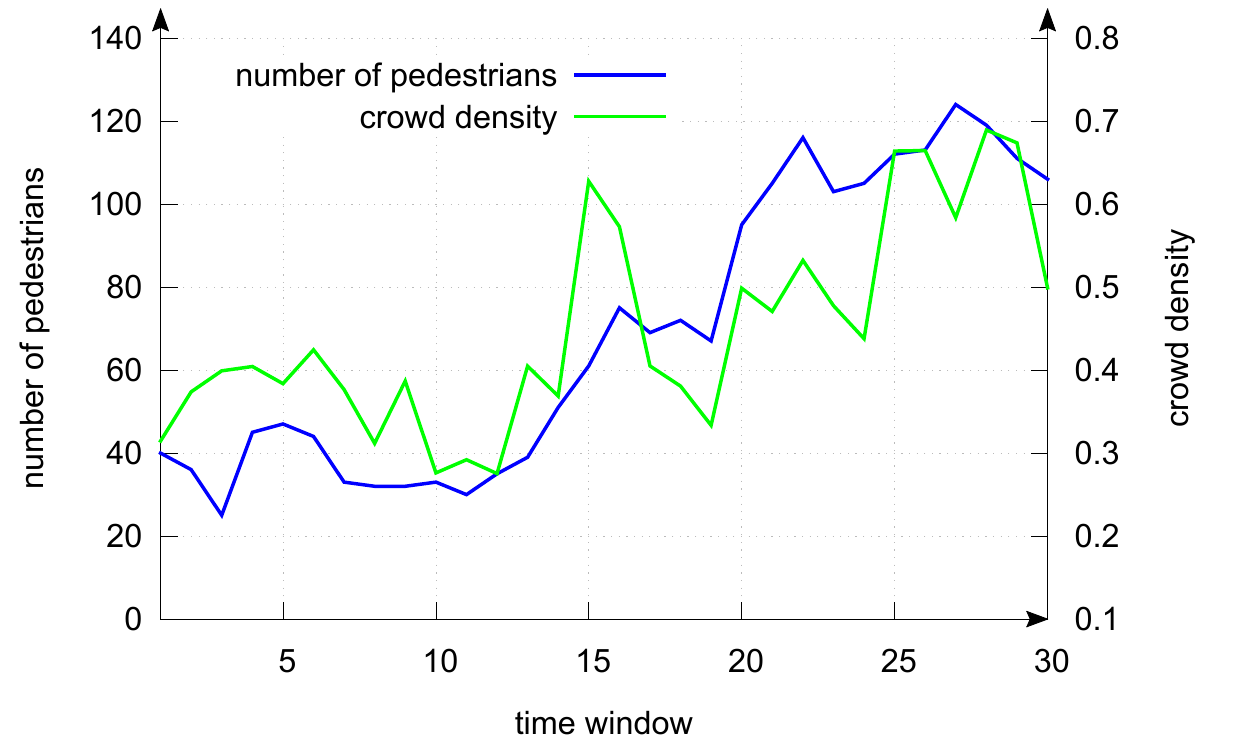}
  \caption{Pedestrians number and $d_\text{i/o}$ ratio temporal evolution in the \texttt{gal1} sequence of \emph{GVEII}.}
  \label{fig:GVEIIpeople}
\end{figure}
In order to underline the importance of capturing changes in density, we compare the batch version of the training algorithm Alg. \ref{BCFW} with a sequential and a fully online version (Fig.~\ref{fig:res_online_comp}). In the former case, examples are fed to the supervised training procedure in temporal order one at a time, while for the latter case, the weights have been initialize to the ones learned batch and the algorithm at each step learns from the previous prediction, thus without supervision.\\
The plot in Fig.~\ref{fig:res_online_comp} shows the performance of the batch training version tends to decrease as the crowd density increases. While the sequential version of the algorithm performs better, it is slow to respond to sudden density changes like in time windows $15$. Indeed, a non-smooth density variation affects negatively the training process, leading to a performance drop further recovered in the subsequent temporal windows. Eventually, this behavior is partially mitigated in the fully online version. The higher performances are motivated by the implicit regularization: using the prediction as training input discourages the learner to drastically modify the weights vector and mimic the smooth variation in crowd density slightly adjusting in time.

\subsection{Performances on Real Detector and Tracker}
\label{exp:real}
Our algorithms assumes the availability of correct trajectories to detect groups, but what happens in a fully automatic video surveillance pipeline where a people detector and tracker are employed?
We carried out experiments
by extracting pedestrian positions through a state of the art detector~\cite{DollarPAMI14pyramids} and obtaining trajectories by means of a continuous energy minimization method~\cite{Milan:2014:CEM}. We compare with Ge~\emph{et~al.}~\cite{ge_vision-based_2012}, Yamaguchi~\emph{et al.}~\cite{yamaguchi_who_2011} and Shao~\emph{et al.}~\cite{shao14} over the same input data, results are shown in Tab.~\ref{tab:track}. Our proposal outperforms the competitors even in the case of noisy trajectories.

Tracking performances evidence a high number of tracks fragments, namely FRG, that are mainly due to the localization error introduced by the automatic people detector on non-trivial crowded scenes. FRGs are proportional to the number of small new tracks created by the system instead of correctly associating previously tracked objects, with the consequence of splitting ideal tracks into temporally disjoint segments.\\
A high FRG number affects the group detection performance as the $d_{ph}$ and $d_{ca}$ features are computed when the trajectories are simultaneously present in the scene and thus merging temporal disjoint fragments is strongly discouraged by the correlation clustering algorithm.
Intuitively, by reducing the size of the window we are able to minimize the number of split trajectories at each example and recover most of the original performances, as shown in Fig.~\ref{fig:detector_tracker}(c).
The improvement is basically achieved through the joint adoption of socially founded features and structural learning that weights the features according to the observed noisy trajectories. The experiment allow us to conclude that even in the case of a real application and imprecise input data the strengths of the proposed algorithm are maintained because are strongly related to the social rules that govern the group formation process, these rules are not data dependent and hold despite the applied feature extraction techniques.

\begin{figure}[t]
	\centering
	\includegraphics[width=\columnwidth]{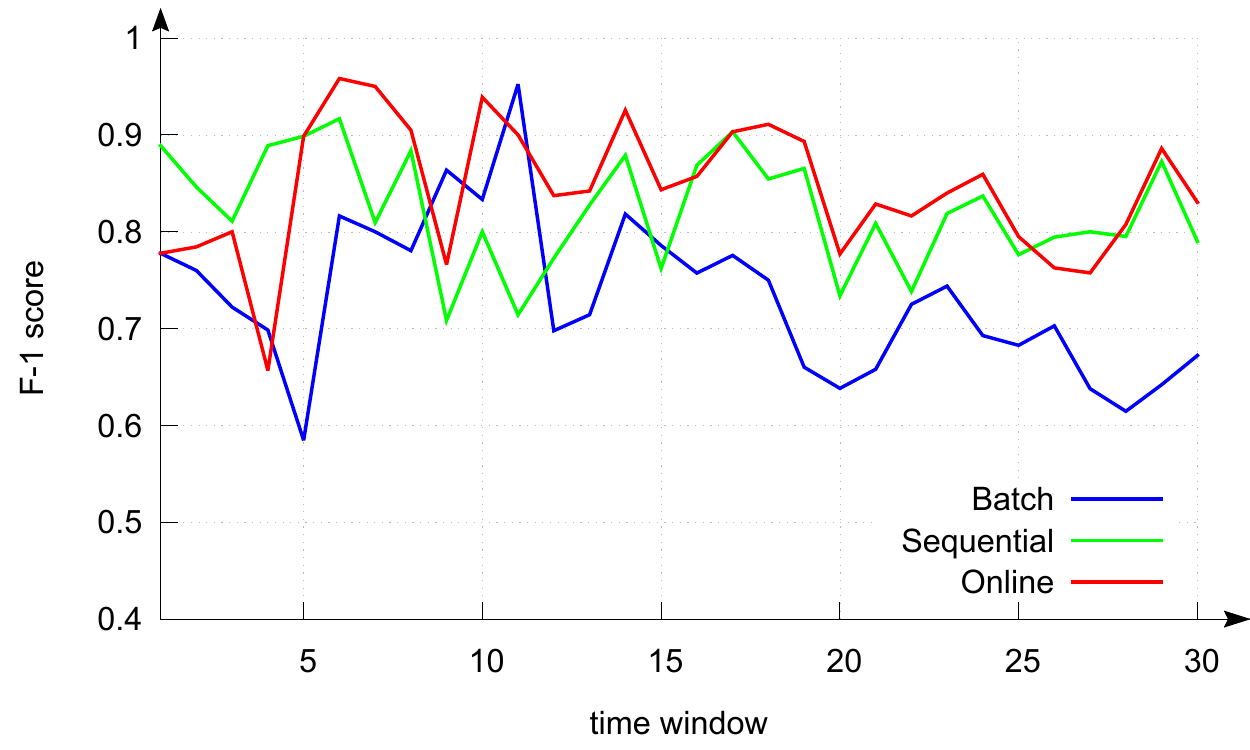}
	\caption{F-1 score comparison between differently trained version of the our method on \texttt{gal1} of \emph{GVEII}.}
	\label{fig:res_online_comp}
\end{figure}

\begin{figure*}[t!]
  \centering
  \subfloat[]{
    \includegraphics[width=0.28\textwidth]{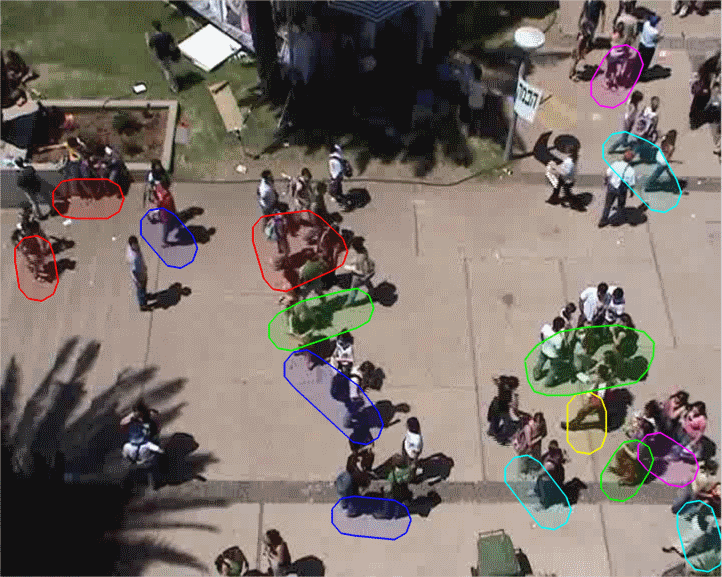}
	}
  \subfloat[]{
      \includegraphics[width=0.28\textwidth]{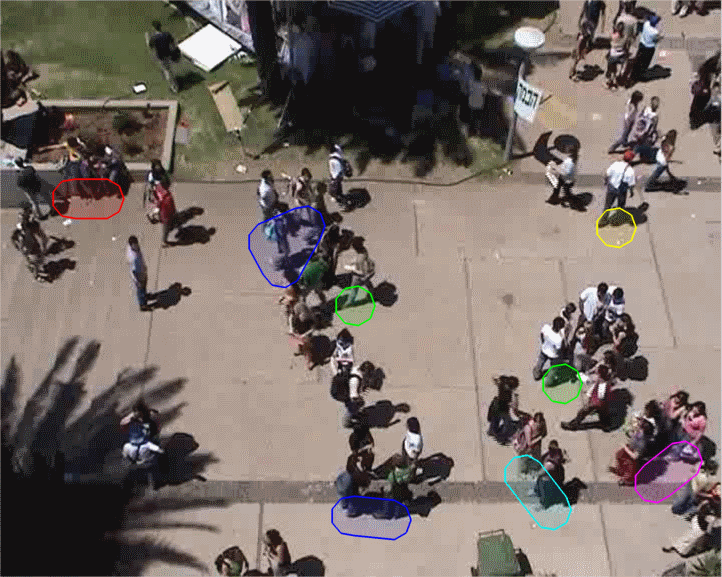}
	}
  \subfloat[]{
      \includegraphics[width=0.4\textwidth]{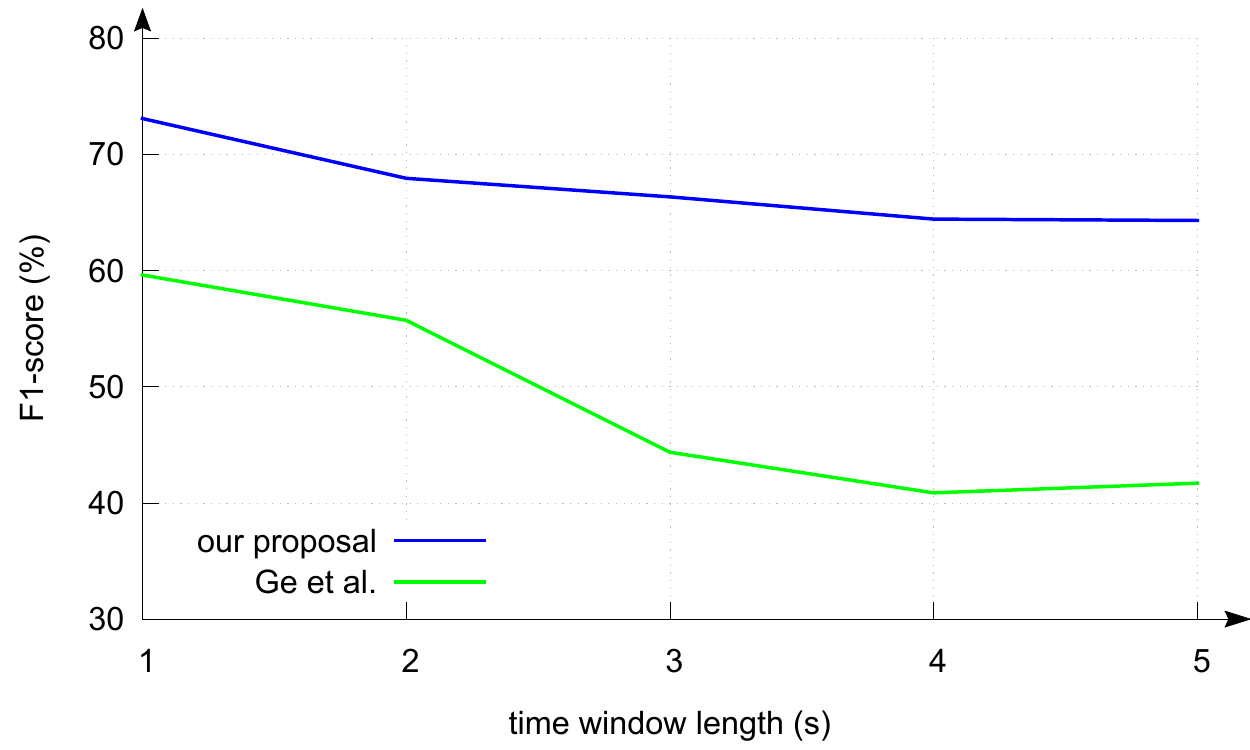}
	}
  \caption{Group detection results on \texttt{student003} are displayed when corrected tracks are used (a) and when input with people detector and tracker automatic responses (b). Regardless of the input noise, most of the groups can still be identified. This is due to the robustness of the features employed during learning and to the decrease in length of the time window (c) which prevents fragmented tracks to be split in different groups.}
  \label{fig:detector_tracker}
\end{figure*}

\begin{figure*}[t!]
  	\centering
  	\subfloat[\texttt{1airport1}]{
    	\includegraphics[width=0.235\textwidth]{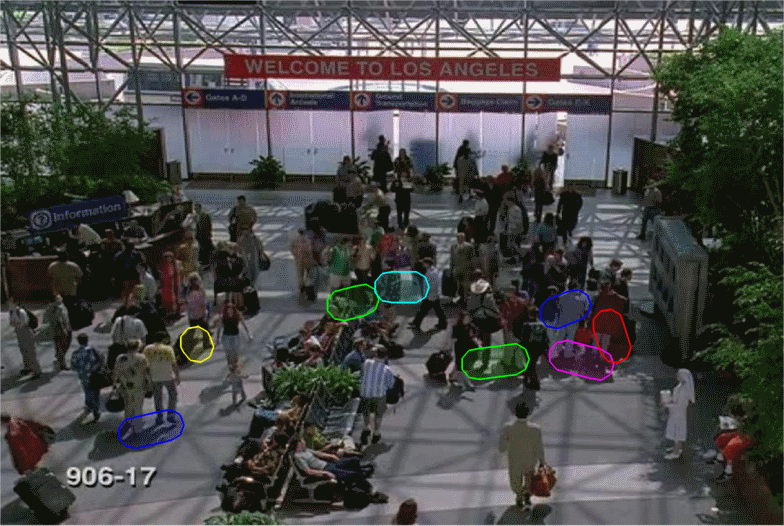}
	}
  	\subfloat[\texttt{1manko3}]{
    	\includegraphics[width=0.235\textwidth]{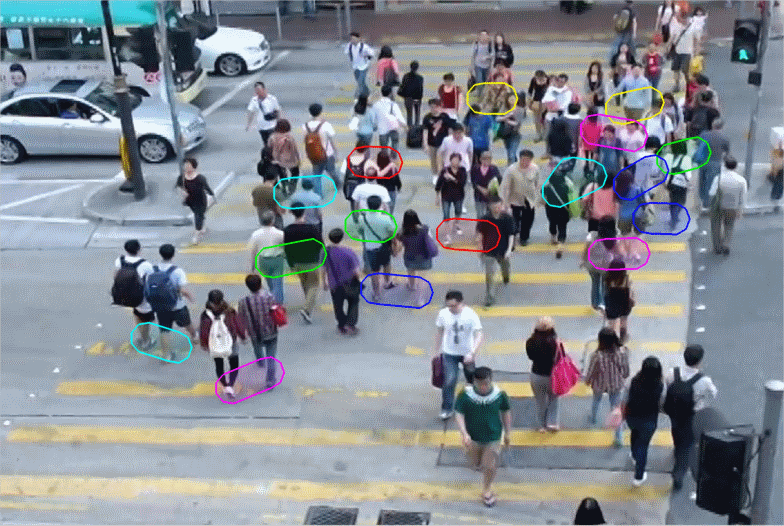}
	}
  	\subfloat[\texttt{2jiansha5}]{
    	\includegraphics[width=0.235\textwidth]{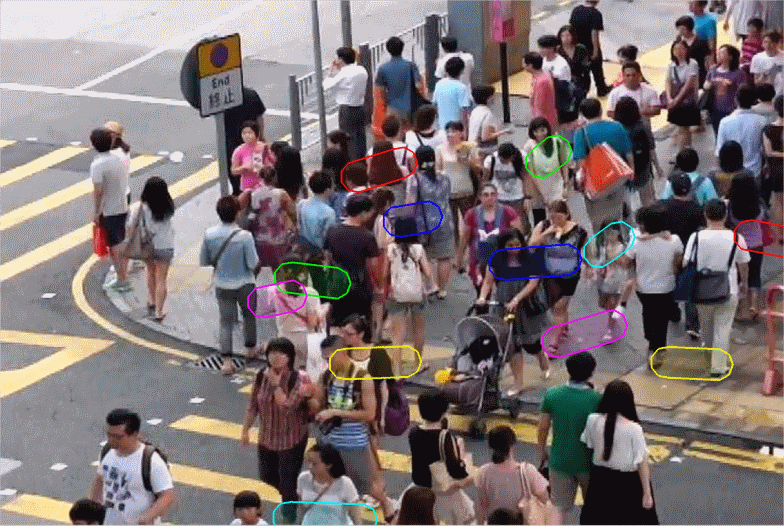}
	}
	\subfloat[\texttt{randomcross3}]{
    	\includegraphics[width=0.235\textwidth]{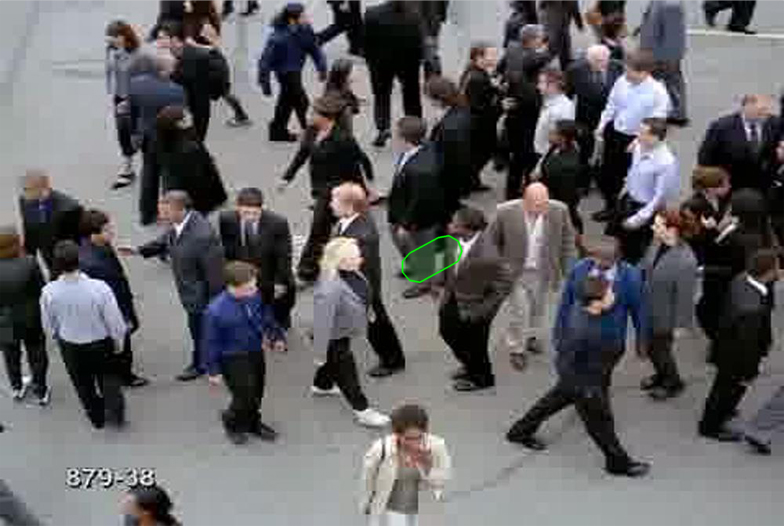}
	}\\
	\subfloat[\texttt{3shatian6}]{
    	\includegraphics[width=0.235\textwidth]{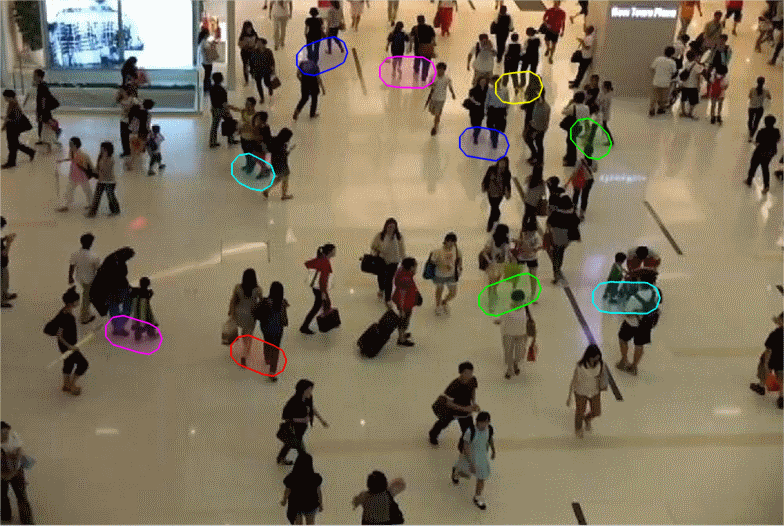}
	}
	\subfloat[\texttt{seq1}]{
    	\includegraphics[width=0.235\textwidth]{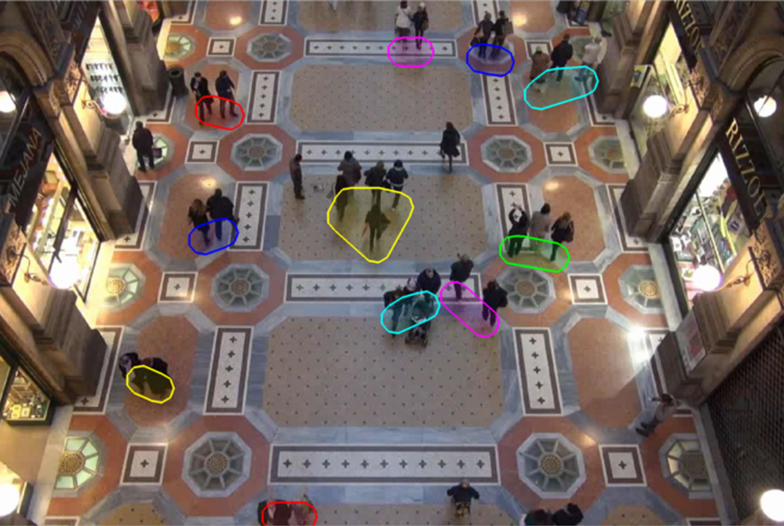}
	}
  	\subfloat[\texttt{eth}]{
      	\includegraphics[width=0.235\textwidth]{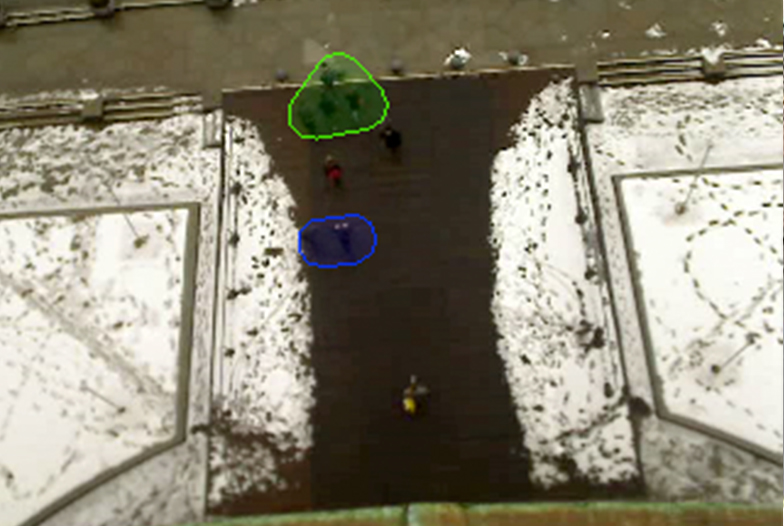}
	}
	\subfloat[\texttt{hotel}]{
      	\includegraphics[width=0.235\textwidth]{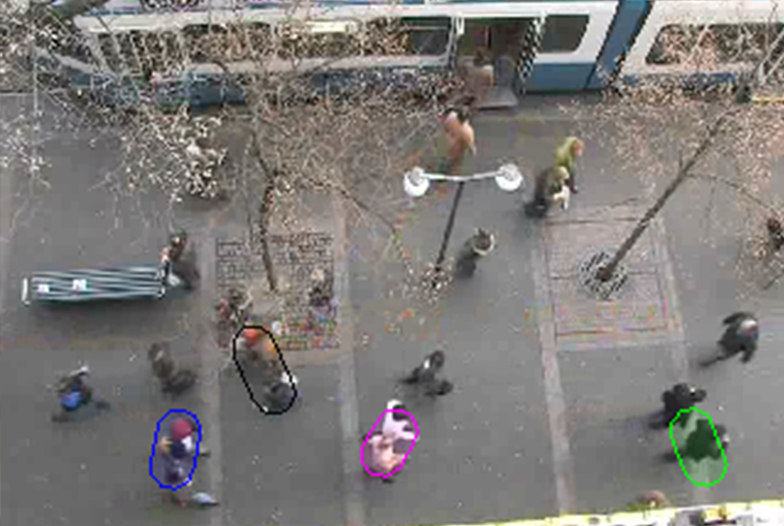}
	}
  \caption{Examples of groups detected through our method: sequences from (a) to (e) are from the \emph{MPT-$20$x$100$}, while (f) is part of \emph{GVEII} and finally, (g) and (h) belong to the \emph{BIWI} dataset. Groups are identified regardless of the scene context and errors are visually acceptable, as in (d).}
  \label{fig:more_results}
\end{figure*}

\section{Conclusion}
In this work, we pointed out the need to approach the task of detecting social groups in crowds from a learning perspective.
Many existing methods rely on specifically tuned parameters that limit their applicability in real world scenarios.
Our intuition is that there are crowds that preserve the same concept of social group, but in many cases this concept cannot be distilled
from spatial consideration only. We thus defined a set of social-inspired and strongly motivated features able to capture and characterize different groups peculiarities.
To learn a socially meaningful clustering rule to group pedestrians, we relied on the Structural SVM framework and designed a peculiar loss function able to account for
singletons as well as for group errors.
Even though the algorithm was originally designed to work with exact trajectories, we replicated the experiments on noisy tracklets extracted by a detector/tracker obtaining state-of-the-art results.
Moreover, we proposed an online training version of the method, able to achieve superior generalization performances on crowds with variable density.

We did note, however, that as we consider wider portions of the scene, the chance that many different densities groups coexist in different locations increases, leading to the necessity to learn more than one clustering rule per scene. To resolve this problem we plan, as future work, to learn a set of different distance measures and use latent variables to choose the most appropriate given a particular zone. Code and datasets are made publicly available\footnote{\texttt{http://imagelab.ing.unimore.it/group-detection}} in order to reproduce this paper results and allow the community to improve the proposed method.


%
%
%
%
%
%
%

\ifCLASSOPTIONcaptionsoff
  \newpage
\fi



\bibliographystyle{IEEEtran}
\bibliography{IEEEabrv,main}

%


%

\begin{IEEEbiography}[{\includegraphics[width=1in,height=1.25in,clip]{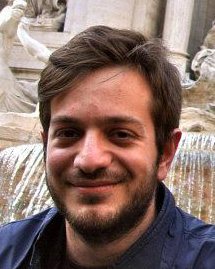}}]{Francesco Solera}
obtained a master degree in computer engineering from the University of Modena and Reggio Emilia in 2013. He is now a PhD candidate within the ImageLab group in Modena, researching on applied machine learning and social computer vision.
\end{IEEEbiography}

\begin{IEEEbiography}[{\includegraphics[width=1in,height=1.25in,clip]{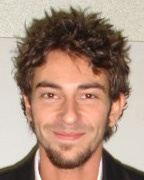}}]{Simone Calderara}
received a computer engineering master degree in 2004 and a PhD degree in 2009 from the University
of Modena and Reggio Emilia, where he is now an assistant professor within the Imagelab group. His current research interests include computer vision and machine learning applied to human
behavior analysis, visual tracking in crowded scenarios and time series analysis for forensic applications.
\end{IEEEbiography}

\begin{IEEEbiography}[{\includegraphics[width=1in,height=1.25in,clip]{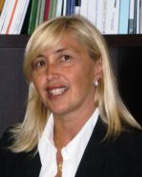}}]{Rita Cucchiara}
received her master degree in electronic engineering and the PhD degree
in computer engineering from the University of Bologna, Italy, in 1989 and 1992 respectively.
Since 2005, she is a full professor at University of Modena and Reggio Emilia, Italy, where she heads the
ImageLab group and the SOFTECH-ICT research center. Her research focuses on pattern
recognition, computer vision and multimedia.
\end{IEEEbiography}




\end{document}